\definecolor{darkgreen}{rgb}{0,.4,.2}
\definecolor{darkblue}{rgb}{.1,.2,.6}
\definecolor{brightblue}{rgb}{0,0.6,0.8}
\newtheoremstyle{style}
 {\topsep}              % Space above
 {\topsep}              % Space below
 {\itshape}              % Body font: original {\normalfont}
 {}                         % Indent amount (empty = no indent,%\parindent = paraindent)
 {\sffamily\bfseries}  % Thm head font original
 {.}	                     % Punctuation after thm head
 { }                         % Space after thm head (\newline = linebreak)
 {}                          % Thm head spec
\theoremstyle{style}
\newtheorem*{rep@theorem}{\rep@title}
\newcommand{\newreptheorem}[2]{%
\newenvironment{rep#1}[1]{%
 \def\rep@title{#2 \ref{##1}}%
 \begin{rep@theorem}}%
 {\end{rep@theorem}}}
\newtheorem{definition}{Definition}
\newtheorem{theorem}[definition]{Theorem}
\newtheorem{proposition}[definition]{Proposition}
\newtheorem{lemma}[definition]{Lemma}
\newcommand{\conv}{\mathop{\rm conv}}
\newcommand{\cone}{\mathop{\rm cone}}
\newcommand{\intcone}{\,\buildrel{\circ}\over{\cone}\!}
\providecommand{\norm}[1]{\left\lVert#1\right\rVert}
\newcommand{\R}{\mathbb{R}}
\newcommand{\Simplex}{\triangle}
\newcommand{\FilledSimplex}{\blacktriangle}
\newcommand{\LOneBall}{\blacklozenge} %\diamondsuit
\newcommand{\simpl}{{\!\scriptstyle{\vartriangle}}} %or use \scriptscriptstyle to make it even smaller
\newcommand{\fSimpl}{{\!\scriptstyle{\blacktriangle}}}
\newcommand{\lOne}{{\diamond}} %\!\scriptstyle{\blacklozenge}
\newcommand{\id}{\mathbf{I}} % big i for identity
\newcommand{\unit}{\mathbf{e}} % unit basis vectors
\newcommand{\one}{\mathbf{1}} % all one vector
\newcommand{\zero}{\mathbf{0}}
\newcommand\SetOf[2]{\left\{#1\,\vphantom{#2}\right.\left|\vphantom{#1}\,#2\right\}}
\begin{document}
\pagestyle{fancy}

%%%%%%%%%%%%%%%%%%%%%%%%%%%%%%%%
\title{An Equivalence between the Lasso \\and Support Vector Machines}

\author{\vspace{-2mm}\\\vspace{-1mm}
Martin Jaggi\footnote{%
A shorter version of this article has appeared in \cite{Jaggi:2013vta}.
%ROKS 2013 - International Workshop on Advances in 
%Regularization, Optimization, Kernel Methods and Support Vector Machines:
%Theory and Applications, July 8 - 10, 2013
%Leuven, Belgium
}%
\\\vspace{-1mm}
{\small ETH Z{\"u}rich, Switzerland}\\\vspace{-1mm}
{\small \href{mailto:jaggi@inf.ethz.ch}{jaggi@inf.ethz.ch}}\vspace{-1em}}
\date{}

\maketitle

\begin{abstract}%  tools in only ML? statistics, and signal processing ?
%\Boxhead{Overview}
%\paragraph{Abstract.}
\textbf{\textsf{Abstract.}}
We investigate the relation of two fundamental tools in machine learning and signal processing, that is the support vector machine (SVM) for classification, and the Lasso technique used in regression. We show that the resulting optimization problems are equivalent, in the following sense. Given any instance of an $\ell_2$-loss soft-margin (or hard-margin) SVM, we construct a Lasso instance having the same optimal solutions, and vice versa.
%was: Given any instance of one of the two problems, we construct an instance of the other, having the same optimal solution. 

As a consequence, many existing optimization algorithms for both SVMs and Lasso can also be applied to the respective other problem instances.
Also, the equivalence allows for many known theoretical insights for SVM and Lasso to be translated between the two settings. One such implication gives a simple kernelized version of the Lasso, analogous to the kernels used in the SVM setting.
Another consequence is that the sparsity of a Lasso solution is equal to the number of support vectors for the corresponding SVM instance, and that one can use screening rules to prune the set of support vectors.
Furthermore, we can relate sublinear time algorithms for the two problems, and give a new such algorithm variant for the Lasso.
We also study the regularization paths for both methods.
%Another consequence being the relation between sparsity of solutions for Lasso and number of support vectors for SVM.
%In the other direction, we also provide a reduction from SVM to the Lasso, provided that a non-optimally separating SVM solution is already available.
\end{abstract}

\section{Introduction}

%\paragraph{Motivation \& Implications.}
Linear classifiers and kernel methods, and in particular the support vector machine (SVM) \cite{cortes95softmargin}, are among the most popular standard tools for classification. 
On the other hand, $\ell_1$-regularized least squares regression, i.e. the Lasso estimator~\cite{Tibshirani:1996wb}, is one of the most widely used tools for robust regression and sparse estimation.
%With the SVM and the Lasso being among the currently most prominent standard tools for classification on one hand, and regression \& sparse estimation on the other hand, ...
 
Along with the many successful practical applications of SVMs and the Lasso in various fields, there is a vast amount of existing literature\footnote{%
As of January 2014, Google Scholar returned $300'000$ publications containing the term \textsf{"Support Vector Machine"}, and over $20'000$ for \textsf{Lasso regression}.
} %
 on the two methods themselves, considering both theory and also algorithms for each of the two. 
However, the two research topics developed largely independently and were not much set into context with each other so far.

In this chapter, we attempt to better relate the two problems, with two main goals in mind.
On the algorithmic side, we show that %the two large classes 
many of the existing algorithms for each of the two problems can be set into comparison, and can be directly applied to the other respective problem.
As a particular example of this idea, we can apply the recent sublinear time SVM algorithm by \cite{Clarkson:2010ue} also to any Lasso problem, resulting in a new alternative sublinear time algorithm variant for the Lasso.

On the other hand, we can relate and transfer existing theoretical results between the  literature for SVMs and the Lasso.
In this spirit, a first example is the idea of the kernel trick. Originally employed for SVMs, this powerful concept has allowed for lifting most insights from the linear classifier setting also to the more general setting of non-linear classifiers, by using an implicit higher dimensional space.
Here, by using our equivalence, we propose a simple kernelized variant of the Lasso, being equivalent to the well-researched use of kernels in the SVM setting.

As another example, we can also transfer some insights in the other direction, from the Lasso to SVMs. The important datapoints, i.e. those that define the solution to the SVM classifier, are called the support vectors. Having a small set of support vectors is crucial for the practical performance of SVMs.
Using our equivalence, we see that the set of support vectors for a given SVM instance is exactly the same as the sparsity pattern of the corresponding Lasso solution.

Screening rules are a way of pre-processing the input data for a Lasso problem, in order to identify inactive variables. We show that screening rules can also be applied to SVMs, in order to eliminate potential support vectors beforehand, and thereby speeding up the training process by reducing the problem size.

Finally, we study the complexity of the solution paths of Lasso and SVMs, as the regularization parameter changes. We discuss path algorithms that apply to both problems, and also translate a result on the Lasso path complexity to show that a single SVM instance, depending on the scaling of the data, can have very different patterns of support vectors.
%For future research, we mention the recent progress in the literature of sparse recovery and compressed sensing, which has significantly deepened the insight into the structure and in particular the sparsity of solutions to $\ell_1$-regularized methods such as the Lasso. Using our proposed construction of equivalent Lasso instances for each SVM instance, the hope is that such results might also give more insights into the sparsity of SVMs in the future (e.g. the distribution and the number of support vectors).

\paragraph{Support Vector Machines.}\index{Support Vector Machine}\index{SVM}
In this chapter, we focus on large margin linear classifiers\index{linear classifier}, and more precisely on those SVM variants whose dual optimization problem is of the form\index{SVM dual optimization problem}
\begin{equation}\label{eq:classifier}
  \min_{x\in\Simplex} \, \norm{Ax}^2 \ .
\end{equation}
Here the matrix $A\in\R^{d\times n}$ contains all $n$ datapoints as its columns, and $\Simplex$ is the unit simplex in $\R^n$, i.e. the set of probability vectors, that is the non-negative vectors whose entries sum up to one.
The formulation~(\ref{eq:classifier}) includes the commonly used soft-margin SVM with $\ell_2$-loss. It includes both the one or two classes variants, both with or without using a kernel, and both using a (regularized) offset term (allowing hyperplanes not passing through the origin) or no offset.
We will explain these variants and the large margin interpretation of this optimization problem in more detail in Section~\ref{sec:SVMproperties}.

\paragraph{Lasso.}\index{Lasso}\index{L1 regularized least squares regression@$\ell_1$-regularized least squares regression}
On the other hand, the Lasso~\cite{Tibshirani:1996wb}, is given by the quadratic program
\begin{equation}\label{eq:lasso}
  \min_{x\in\LOneBall} \, \norm{Ax-b}^2 \ .
\end{equation}
It is also known as the constrained variant of $\ell_1$-regularized least squares regression\index{least squares regression}. Here the right hand side $b$ is a fixed vector $b\in\R^d$, and $\LOneBall$ is the $\ell_1$-unit-ball in~$\R^n$. Note that the $1$-norm~$\norm{.}_1$ is the sum of the absolute values of the entries of a vector.
%\footnote{%
Sometimes in practice, one would like to have the constraint $\norm{x}_1\le r$ for some value $r>0$, instead of the simple unit-norm case $\norm{x}_1\le 1$. 
However, in that case it is enough to simply re-scale the input matrix~$A$ by a factor of $r$, in order to obtain our above formulation~(\ref{eq:lasso}) for any general Lasso problem.
%Note that the constrained variant $\min_{x} \, \norm{Ax-b}$ s.t. $\norm{x}_1\le r$ (for values $r>0$) instead of the unit ball variant $\norm{x}_1\le 1$ is obtained by simply re-scaling the input matrix $A$ by the factor of $\frac1r$.
%} %

In applications of the Lasso, it is important to distinguish two alternative interpretations of the data matrix~$A$, which defines the problem instance (\ref{eq:lasso}):
on one hand, in the setting of \emph{sparse regression}\index{sparse regression}, the matrix $A$ is usually called the dictionary matrix, with its columns $A_{:j}$ being the dictionary elements, and the goal being to approximate the single vector $b$ by a combination of few dictionary vectors.
On the other hand if the Lasso problem is interpreted as \emph{feature-selection}, then each row $A_{i:}$ of the matrix $A$ is interpreted as an input vector, and for each of those, the Lasso is approximating the response $b_i$ to input row $A_{i:}$.
The book \cite{Buhlmann:2011jp} gives a recent overview of Lasso-type methods and their broad applications.
While the penalized variant of the Lasso (meaning that the term $\norm{x}_1$ is added to the objective instead of imposed as a constraint) is also popular in applications, here we focus on the original constrained variant.

\paragraph{The Equivalence.}\index{equivalent optimization problems}
%In this paper, we investigate the relation between the Lasso and SVM in detail. 
We will prove that the two problems (\ref{eq:classifier}) and (\ref{eq:lasso}) are indeed equivalent, in the following sense.
For any Lasso instance given by~$(A,b)$, we construct an equivalent (hard-margin) SVM instance, having the same optimal solution. This will be a simple reduction preserving all objective values. On the other hand, the task of finding an equivalent Lasso instance for a given SVM appears to be a harder problem. Here we show that there always exists such an equivalent Lasso instance, and furthermore, if we are given a weakly-separating vector for the SVM (formal definition to follow soon below), then we can explicitly construct the equivalent Lasso instance. This reduction also applies to the $\ell_2$-loss soft-margin SVM, where we show that a weakly-separating vector is trivial to obtain, making the reduction efficient.
The reduction does \emph{not} require that the SVM input data is separable.

Our shown equivalence is on the level of the (SVM or Lasso) \emph{training} formulations, we're not making any claims on the performance of the two different kind of methods on unseen \emph{test} data.

On the way to this goal, we will also explain the relation to the ``non-negative'' Lasso\index{non-negative regression}\index{non-negative Lasso} variant when the variable vector $x$ is required to lie in the simplex, i.e.
\begin{equation}\label{eq:posLasso}
  \min_{x\in\Simplex} \, \norm{Ax-b}^2 \ .
\end{equation}
It turns out the equivalence of the optimization problems~(\ref{eq:classifier}) and~(\ref{eq:posLasso}) is straightforward to see. Our main contribution is to explain the relation of these two optimization problems to the original Lasso problem~(\ref{eq:lasso}), and to study some of the implications of the equivalence.

%\paragraph{Two Oracles.}
%Suppose we are given a blackbox that for any given matrix $A\in\R^{d\times n}$ and any vector $b\in\R^d$, computes a solution
%\begin{equation}\label{eq:argPolydist}
%  \polydist(A,b) := \argmin_{x\in\Simplex} \norm{Ax-b}^2 \ .
%\end{equation}
%Geometrically, this means that the procedure returns a solution to the polytope distance problem given by the convex hull of the columns of $A$, to the point $b$. On the other hand we are given a second oracle
%\begin{equation}\label{eq:argLasso}
%  \lasso(A,b) := \argmin_{x\in\LOneBall} \norm{Ax-b}^2 \ ,
%\end{equation}
%which is a very common robust regression formulation, namely $\ell_1$-regularized least squares regression, also known as the Lasso~\cite{Tibshirani:1996wb}.
%
%\emph{Question:} Given access to one of the oracles~(\ref{eq:argPolydist}) or~(\ref{eq:argLasso}), for any $A$,$b$, can you simulate the other one? I.e. can you construct some (possibly different) matrices $A'$ and vectors $b'$ such that you can get the solution of that one from solution of the other?
%

\paragraph{Related Work.}
The early work of \cite{Girosi:1998dd} has already significantly deepened the joint understanding of kernel methods and the sparse coding setting of the Lasso. Despite its title, \cite{Girosi:1998dd} is not addressing SVM classifiers, but in fact the $\varepsilon$-insensitive loss variant of support vector regression\index{Support Vector Regression}\index{SVR} (SVR), which the author proves to be equivalent to a Lasso problem where~$\varepsilon$ then becomes the $\ell_1$-regularization.
Unfortunately, this reduction does not apply anymore when $\varepsilon=0$, which is the case of interest for standard hinge-loss SVR, and also for SVMs in the classification setting, which are the focus of our work here.
 %: Their formulation (2.8) is exactly identical to the SVR problem (9.17) of \cite{Scholkopf:2002ta}. 

Another reduction has been known if the SVR insensitivity parameter~$\varepsilon$ is chosen close enough to one. In that case, \cite{Pontil:1998vs} has shown that the SVR problem can be reduced to SVM classification with standard hinge-loss. Unfortunately, this reduction does not apply to Lasso regression.

In a different line of research, \cite{Li:2005wr} have studied the relation of a dual variant of the Lasso to the primal of the so called \emph{potential SVM} originally proposed by \cite{Hochreiter:2004wv,Hochreiter:2006jc}%newer paper is slightly more general: \cite{Hochreiter:2006jc}
, which is not a classifier but a specialized method of feature selection.
%the authors also call \emph{feature vector machine}. This is a non-standard formulation not used for classification but for feature selection, building a weighted linear combination of features, in contrast to the "classical" SVM which finds a linear combination of examples.
%In our work here, we consider the (primal) constrained version of the Lasso, and study its relation to the "classical" SVM.

In the application paper \cite{Ghosh:2005ft} in the area of computational biology, the authors already suggested to make use of the ``easier'' direction of our reduction, reducing the Lasso to a very particular SVM instance. The idea is to employ the standard trick of using two non-negative vectors to represent a point in the $\ell_1$-ball~\cite{Bloomfield:1983tn,Chen:1998hm}. Alternatively, this can also be interpreted as considering an SVM defined by all Lasso dictionary vectors together with their negatives ($2n$ many points). We formalize this interpretation more precisely in  Section~\ref{sec:LassoLeSVM}. The work of \cite{Ghosh:2005ft} does not address the SVM regularization parameter.
%however the authors in their are not completely formal in taking care of the regularization parameter (and soft margin?)!

%
\paragraph{Notation.}
The following three sets of points will be central for our investigations. We denote the 
unit simplex\index{simplex}, the filled simplex as well as the $\ell_1$-unit-ball\index{L1 ball@$\ell_1$-ball} in $\R^n$ as follows.
\begin{eqnarray*}
\Simplex &:=& \textstyle\SetOf{x\in\R^n}{x\ge\zero,\ \sum_i x_i = 1} \ ,\\
\FilledSimplex &:=& \textstyle\SetOf{x\in\R^n}{x\ge\zero,\ \sum_i x_i \le 1} \ ,\\
\LOneBall &:=& \textstyle\SetOf{x\in\R^n}{\norm{x}_1\le1} \ .
\end{eqnarray*}

The $1$-norm $\norm{.}_1$ is the sum of the absolute values of the entries of a vector\index{L1 norm@$\ell_1$-norm}. The standard Euclidean norm is written as $\norm{.}$.

For a given matrix $A\in\R^{d\times n}$, we write $A_i\in\R^d$, $i\in[1..n]$ for its columns.
We use the notation $AS:= \SetOf{Ax}{x\in S}$ for subsets $S\subseteq\R^d$ and matrices $A$. The convex hull of a set $S$ is written as $\conv(S)$\index{convex hull}.
By $\zero$ and $\one$ we denote the all-zero and all-ones vectors in $\R^n$, and~$\id_n$ is the $n\times n$ identity matrix. We write $(A|B)$ for the horizontal concatenation of two matrices $A,B$.

%title used to be: Properties of the SVM Problem
%
\section{Linear Classifiers and Support Vector Machines}\label{sec:SVMproperties}

Linear classifiers\index{linear classifier}\index{classification} have become the standard workhorse for many machine learning problems. Suppose we are given $n$ datapoints $X_i\in\R^d$, together with their binary labels $y_i\in \{\pm1\}$, for $i\in[1..n]$.

As we illustrate in Figure~\ref{fig:SVMsep}, a linear classifier is a hyperplane\index{hyperplane} that partitions the space $\R^d$ into two parts, such that hopefully each point with a positive label will be on one side of the plane, and the points of negative label will be on the other site.
Writing the classifier as the normal vector\index{normal vector} $w$ of that hyperplane, we can formally write this separation\index{separation} as $X_i^T w > 0$ for those points~$i$ with $y_i\ =+1$, and  $X_i^T w < 0$ if $y_i\ =-1$, assuming we consider hyperplanes that pass through the origin.

\begin{figure}[h]
\centerline{
\includegraphics[width=0.95\textwidth]{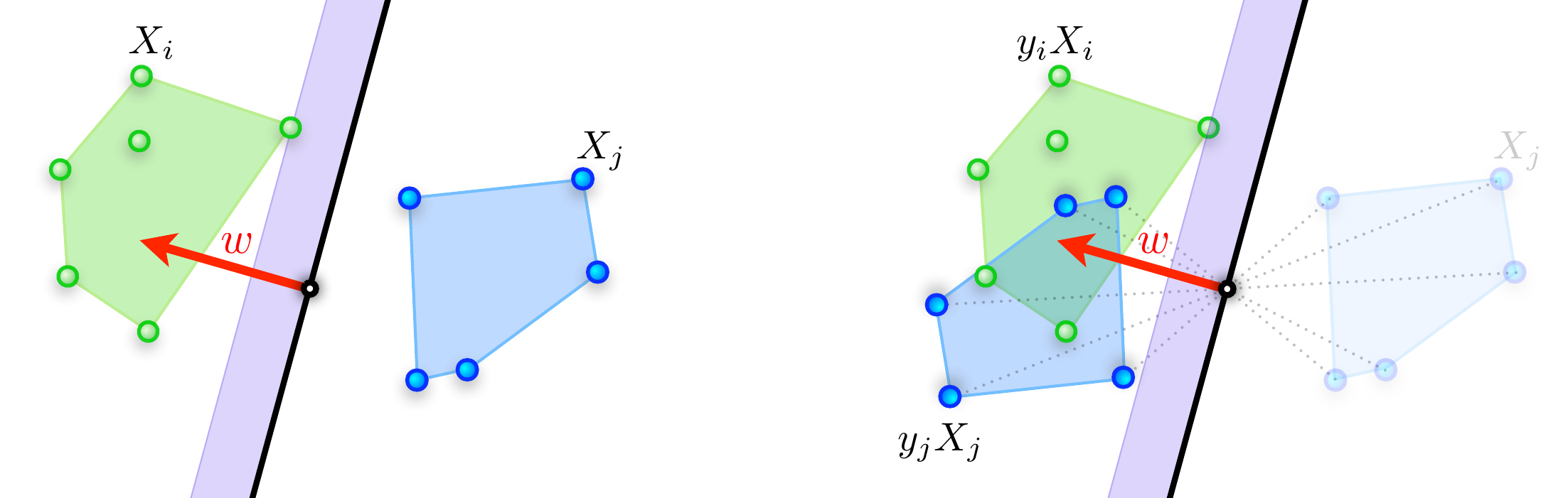}
}
\caption[Linear Classifier]{A linear classifier: illustration of the separation of two point classes, for a normal vector $w$. Here we highlight two points $i,j$ of different labels, with $y_i=+1$ and $y_j=-1$.}
\label{fig:SVMsep}
\end{figure}

\subsection{Support Vector Machines}
The popular concept of \emph{support vector machines} (SVM)\index{Support Vector Machine}\index{SVM} is precisely this linear classifier idea as we have introduced above, with one important addition: instead of being satisfied with any arbitrary one among all separating hyperplanes, we want to find the hyperplane which separates the two point classes by the best possible margin.

The \emph{margin}\index{margin} is defined as the smallest distance to the hyperplane among all the datapoints. Maximizing this margin over all possible linear classifiers~$w$ can be simply formalized as the following optimization problem:
%The linear classifier separating the two point classes by the best possible margin is found by the following SVM optimization problem 
\[
\max_{w\in\R^d} \, \min_i \, y_iX_i^T\frac{w}{\norm{w}} \ ,
\]
i.e. maximizing the smallest projection onto the direction $w$, among all datapoints.

The SVM optimization problem~(\ref{eq:classifier}) that we defined in the introduction is very closely related to this form of optimization problem, as follows. If we take the Lagrange dual of problem~(\ref{eq:classifier}), we exactly obtain a problem of this margin maximization\index{hard-margin SVM} type, namely
%In our case, the \emph{dual} of SVM problem~(\ref{eq:classifier}) for any given matrix $A$ is exactly such a large margin formulation: If we think of the columns of $A$ being the datapoints $A_i = y_iX_i$, then the Lagrange dual of~(\ref{eq:classifier}) is
\vspace{-1mm}
\begin{equation}\label{eq:svmProj}
\max_{x\in\Simplex} \, \min_i \, A_i^T\frac{Ax}{\norm{Ax}} \ .
\end{equation}
Here we think of the columns of $A$ being the SVM datapoints with their signs $A_i = y_iX_i$.
For an overview of Lagrange duality\index{Lagrange duality}, we refer the reader to \cite[Section 5]{Boyd:2004uz}, or see for example \cite[Appendix A]{GartnerJaggi:2009} for the SVM case here.
An alternative and slightly easier way to obtain this dual problem is to compute the simple ``linearization'' dual function as in \cite{Hearn:1982ee,Jaggi:2013wg}, which avoids the notion of any dual variables. When starting from the non-squared version of the SVM problem~(\ref{eq:classifier}), this also gives the same formulation (\ref{eq:svmProj}).
%notes
%Computing the linearization/fenchel dual of (squared) problem (1):
%\[
%\begin{array}{rl}
% & \min_{s\in\Simplex}  \norm{Ax}^2 + \langle s-x, 2 A^T Ax\rangle \\
%=& \min_{s\in\Simplex}  \norm{Ax}^2 - 2x^T A^T Ax + 2s^T A^T Ax \\
%=& \min_{s\in\Simplex}  - x^T A^T Ax + 2s^T A^T Ax \\
%=& - x^T A^T Ax + 2 \min_{i} A_i^T Ax
%\end{array}
%\]
%/notes

No matter if we use problem formulation~(\ref{eq:classifier}) or (\ref{eq:svmProj}),
any feasible weight vector $x$ readily gives us a candidate classifier $w=Ax$, represented as a convex combination of the datapoints, because the weight vectors $x\in\Simplex\subset\R^n$ lie in the simplex. The datapoints corresponding to non-zero entries in $x$ are called the \emph{support vectors}\index{support vectors}.

Several other SVM variants of practical interest also have the property that their dual optimization problem is of the form~(\ref{eq:classifier}), as we will discuss in the next Subsection~\ref{sec:LassoLeSVM}.

\paragraph{Kernelization.}\index{kernelization}\index{kernel trick}
A crucial and widely used observation is that both optimization problems (\ref{eq:classifier}) and (\ref{eq:svmProj}) are formulated purely in terms of the inner products between pairs of datapoints $A_i := y_iX_i$, meaning that they can directly be optimized in the \emph{kernel} case \cite{cortes95softmargin}, provided that we only have access to the entries of the matrix $A^TA \in\R^{n\times n}$, but not the explicit features $A\in\R^{d\times n}$.
The matrix $K = A^TA$ is called the \emph{kernel matrix} in the literature. Using this notation, the SVM dual optimization problem (\ref{eq:classifier}) becomes
\[
  \min_{x\in\Simplex} \, x^TKx \ .
\]

\paragraph{Separation, and Approximate Solutions.}
It is natural to measure the quality of an approximate solution\index{approximate solution} $x$ to the SVM problem as the attained margin, which is precisely the attained value in the above problem~(\ref{eq:svmProj}).

\begin{definition}\label{def:weaklySep}
A vector $w\in\R^d$ is called \emph{$\sigma$-weakly-separating}\index{weakly-separating}\index{separation} for the SVM instance~(\ref{eq:classifier}) or~(\ref{eq:svmProj}) respectively, for a parameter $\sigma\ge0$, if it holds that
\[
A_i^T\frac{w}{\norm{w}} \ge \sigma ~~~\forall i\ ,
\]
meaning that $w$ attains a margin of separation of $\sigma$.
%Similarly, if $A_i^T\frac{w}{\norm{w}} > 0 ~\forall i$, then $w$ is called \emph{separating}. (i.e. weakly separating for some $\sigma>0$).
\end{definition}
The margin value $\sigma$ in this definition, or also the objective in~(\ref{eq:svmProj}), can be interpreted as a useful certificate for the attained optimization quality as follows.
If we take some $x$ (the separating vector now being interpreted as $w = Ax$), then the \emph{duality gap} is given by the difference of the margin value from the corresponding objective $\norm{Ax}$ in problem~(\ref{eq:classifier}). This gap function is a certificate for the approximation quality (since the unknown optimum must lie within the gap), which makes it a very useful stopping criterion for optimizers, see e.g. \cite{Hearn:1982ee,GartnerJaggi:2009,Clarkson:2010ue,Jaggi:2013wg}.

The simple perceptron\index{perceptron} algorithm~\cite{Rosenblatt:1958} is known to return a $\sigma$-weakly-separating solution to the SVM after $O(1/\varepsilon^2)$ iterations, for $\varepsilon := \sigma^*-\sigma$ being the additive error, if~$\sigma^*$ is the optimal solution to~(\ref{eq:classifier}) and~(\ref{eq:svmProj}).
%The currently fastest optimizers for SVM run in sublinear time in the size of the input data \cite{Hazan:2011wq} %TODO: be a bit more clear that hazan:2012wq is actually doing l1-loss, not exactly the l2-soft margin trick that we do here.

%It is known that many SVM variants can be equivalently formulated in the above standard form~(\ref{eq:classifier}). This includes the classical hard-margin SVM, as well as the soft-margin variants with $\ell_2$-loss with regularized offset or no offset, both in the one-class and the two-class case, see e.g. \cite{Scholkopf:2002ta,Tsang:2005up,GartnerJaggi:2009}. We will detail the soft-margin case in Section \ref{sec:softSVM} below for completeness.
%For completeness, we provide an explicit derivation for some $\ell_2$-loss SVM variants in Appendix~\ref{sec:SVMderivation}.

\subsection{Soft-Margin SVMs}\label{sec:softSVM}\index{soft-margin SVM}
For the successful practical application of SVMs, the soft-margin concept of tolerating outliers is of central importance.
Here we recall that also the soft-margin SVM variants using $\ell_2$-loss, with regularized offset or no offset, both in the one-class and the two-class case, can be formulated in the form~(\ref{eq:classifier}). This fact is known in the SVM literature \cite{Scholkopf:2002ta,Keerthi:2000tj,Tsang:2005up}, and can be formalized as follows.

The two-class soft-margin SVM with squared loss ($\ell_2$-loss), without offset term, is given by the primal optimization problem
\begin{equation}\label{eq:softSVMprimal}
\begin{array}{rl}
   \displaystyle\min_{\substack{\bar w \in \R^d,\ \rho\in\R,\\ \xi\in\R^n}} & \frac12 \norm{\bar w}^2 - \rho + \frac{C}{2}\sum_i \xi_i^2\vspace{-2pt}  \\
    s.t. &  y_i \cdot \bar w^T X_i \ge \rho - \xi_i \ \ \forall i \in [1..n] \ .\!\!
\end{array}
\end{equation}
For each datapoint, we have introduced a slack-variable\index{slack-variable} $\xi_i$ which is penalized in the objective function in the case that the point does violate the margin.
Here $C>0$ is the regularization parameter\index{regularization parameter}, steering the tradeoff between large margin and punishing outliers. In the end, $\rho/\!\norm{\bar w}$ will be the attained margin of separation. 
Note that in the classical SVM formulation, the margin parameter~$\rho$ is usually fixed to one instead, while~$\rho$ is explicitly used in the equivalent $\nu$-SVM formulation known in the literature, see e.g. \cite{Scholkopf:2002ta}.
%For the case of $\ell_2$-loss, it is known that the dual problem of the soft-margin SVM (without or with regularized offset, as we discuss below) is of the form~(\ref{eq:classifier}), see e.g.~\cite{Tsang:2005up,Keerthi:2000tj}
The equivalence of the soft-margin SVM dual problem\index{SVM dual optimization problem} to the optimization problem~(\ref{eq:classifier}) is stated in the following Lemma:
\begin{lemma}\label{lem:softSVM}
The dual of the soft-margin SVM (\ref{eq:softSVMprimal}) is an instance of the classifier formulation~(\ref{eq:classifier}), that is \ 
$
  \min_{x\in\Simplex} \, \norm{Ax}^2
$
, with\vspace{-1pt}
\[
A := {Z\choose\frac1{\sqrt{C}}\id_n} \in \R^{(d+n)\times n} \vspace{-1pt}
\]
where the data matrix $Z\in\R^{d\times n}$ consists of the $n$ columns $Z_i := y_iX_i$.
\end{lemma}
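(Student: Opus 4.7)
The plan is to derive the Lagrange dual of (\ref{eq:softSVMprimal}) directly, eliminate the primal variables via the stationarity (KKT) conditions, and then recognize the resulting quadratic objective as $\norm{Ax}_2^2$ for the claimed augmented matrix $A$.

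First I would form the Lagrangian with multipliers $\alpha_i\ge 0$ attached to the constraints $y_i\bar w^T X_i \ge \rho - \xi_i$:
\[
L = \tfrac12\norm{\bar w}_2^2 - \rho + \tfrac{C}{2}\sum_i \xi_i^2 - \sum_i \alpha_i\bigl(y_i\bar w^T X_i - \rho + \xi_i\bigr).
\]
Setting the partial derivatives to zero yields three clean identities: $\bar w = \sum_i \alpha_i y_i X_i = Z\alpha$ (from $\partial/\partial \bar w$), $\sum_i \alpha_i = 1$ (from $\partial/\partial\rho$), and $\xi_i = \alpha_i/C$ (from $\partial/\partial\xi_i$). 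The second condition together with $\alpha\ge 0$ is exactly the simplex constraint $\alpha\in\Simplex$, which is the key structural piece that will match the form of (\ref{eq:classifier}).

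Next I would substitute these back into $L$. The term $-\rho + \rho\sum_i\alpha_i$ vanishes because $\sum_i\alpha_i=1$. The cross term $-\sum_i\alpha_i y_i \bar w^T X_i$ collapses to $-\norm{Z\alpha}_2^2$ after using $\bar w = Z\alpha$, and the $\xi$-terms combine to give $\tfrac{C}{2}\sum_i(\alpha_i/C)^2 - \sum_i\alpha_i(\alpha_i/C) = -\tfrac{1}{2C}\norm{\alpha}_2^2$. Altogether the dual function equals $-\tfrac12\norm{Z\alpha}_2^2 - \tfrac{1}{2C}\norm{\alpha}_2^2$, so the dual problem is
\[
\min_{\alpha\in\Simplex}\ \tfrac12\norm{Z\alpha}_2^2 + \tfrac{1}{2C}\norm{\alpha}_2^2.
\]

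Finally I would observe that for the augmented matrix $A = \binom{Z}{\frac{1}{\sqrt C}\id_n}\in\R^{(d+n)\times n}$,
\[
\norm{A\alpha}_2^2 = \norm{Z\alpha}_2^2 + \tfrac{1}{C}\norm{\alpha}_2^2,
\]
so the dual is exactly $\min_{\alpha\in\Simplex}\tfrac12\norm{A\alpha}_2^2$, which (up to the irrelevant factor $1/2$) is the instance of (\ref{eq:classifier}) claimed in the statement; the optimal $\alpha$ coincide.

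The routine calculation is a bit bookkeeping-heavy but presents no conceptual obstacle. The only subtlety worth flagging is that strong duality must hold so that the primal and dual share the same optimizers; this is immediate since (\ref{eq:softSVMprimal}) is a convex quadratic program with affine constraints and is trivially feasible (e.g., $\bar w = 0$, $\rho$ very negative, $\xi_i = 0$), so Slater's condition applies and no duality gap remains.
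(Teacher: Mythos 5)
Your proposal is correct and follows essentially the same route as the paper's own proof: form the Lagrangian, eliminate $\bar w$, $\rho$, $\xi$ via the stationarity conditions, and recognize the resulting quadratic $\norm{Z\alpha}_2^2 + \tfrac{1}{C}\norm{\alpha}_2^2$ over the simplex as $\norm{A\alpha}_2^2$ for the augmented matrix. The added remark on Slater's condition is a small bonus the paper leaves implicit.
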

\begin{proof}
Given in Appendix~\ref{sec:SVMderivation} for completeness, using standard Lagrange duality.\index{Lagrange duality}
\end{proof}

Not all SVM variants have our desired structure of the dual problem. 
For example, the hinge-loss\index{hinge-loss} (or $\ell_1$-loss) SVM refers to the setting where the outliers are penalized according to their margin-violation $\xi_i$, not the squared values $\xi_i^2$ as we use here. Changing the loss function in the primal optimization problem also affects the dual problem, so that the dual of the $\ell_1$-loss SVM is \emph{not} of the form~(\ref{eq:classifier}).
However in practice, it is known that both SVM variants with $\ell_1$- or $\ell_2$-loss do perform similarly well for most applications~\cite{Lee:2001vf,Chang:2008up}.

\paragraph{Obtaining a Weakly-Separating Vector for the $\ell_2$-loss Soft-Margin SVM.}\index{weakly-separating}
By the above lemma, we observe that a weakly-separating vector is trivial to obtain for the $\ell_2$-loss SVM. This holds without any assumptions on the original input data $(X_i,y_i)$.
We set $w:= {\zero\choose\frac1{\sqrt{n}}\one} \in \R^{d+n}$ to the all-one vector only on the second block of coordinates, rescaled to unit length. Clearly, this direction~$w$ attains a separation margin of 
\[
A_i^T\frac{w}{\norm{w}} = {y_iX_i\choose\frac1{\sqrt{C}}\unit_i}^T {\zero\choose\frac1{\sqrt{n}}\one} = \frac1{\sqrt{nC}} > 0
\]
for all points~$i$ in Definition~\ref{def:weaklySep}.
%
%OLD:
%Let $\alpha := \unit_1$, and consider $w = \tilde A\alpha = \tilde A\unit_1 = {A_1\choose\frac1{\sqrt{C}}\unit_1}$. 
%Then $\norm{\tilde A\alpha}^2=\norm{A_{1}}^2+\frac1C$.
%Furthermore, let's consider the margin attained by this classifier for all points $A_i$, which is $\min_i \, \tilde A_i^T\frac{w}{\norm{w}} $, i.e.
%\[
%\begin{array}{rl}
%\tilde A_i^T\frac{w}{\norm{w}} 
%=& \tilde A_i^T\frac{\tilde A_1}{\norm{\tilde A_1}} \\
%=& {A_i\choose\frac1{\sqrt{C}}\unit_i}^T {A_1\choose\frac1{\sqrt{C}}\unit_1}  \frac{1}{\sqrt{\norm{A_{1}}^2+\frac1C}}  \\
%=& (A_i^TA_1 + \frac1C \delta_{i1})  \frac{1}{\sqrt{\norm{A_{1}}^2+\frac1C}}
%\end{array}
%\]

\paragraph{Incorporating an Offset Term.}\index{offset (SVM)}\index{bias (SVM)}
Our above SVM formulation also allows the use of an \emph{offset} (or bias) variable $b\in\R$ to obtain a classifier that does not necessarily pass through the origin.
Formally, the separation constraints then become $y_i \cdot (w^T X_i + b) \ge \rho - \xi_i \ \ \forall i \in [1..n]$.
There is a well-known trick to efficiently emulate such an offset parameter while still using our formulation~(\ref{eq:softSVMprimal}), by simply increasing the dimensionality of~$X_i$ and~$w$ by one, and adding a fixed value of one as the last coordinate to each of the datapoints~$X_i$, see e.g. \cite{Keerthi:2000tj,Tsang:2005up}. As a side-effect, the offset $b^2$ is then also regularized in the new term~$\norm{w}^2$. Nevertheless, if desired, the effect of this additional regularization can  be made arbitrary weak by re-scaling the fixed additional feature value from one to a larger value. 
%when a kernel is used, this trick must be applied to the feature mapping

\paragraph{One-Class SVMs.}\index{one-class SVM}
All mentioned properties in this section also hold for the case of \emph{one-class SVMs}, by setting all labels $y_i$ to one, resulting in the same form of optimization problems (\ref{eq:classifier}) and (\ref{eq:svmProj}). One class SVMs are popular for example for anomaly or novelty detection applications.

%history: the hard-margin equivalence trick is due to \cite{Friess:1998vq}, while the "regularizing the bias trick" apparently also first mentioned by the icml paper \cite{Friess:1998te}, or better in the later paper by \cite{Keerthi:2000tj}

\newpage
\section{The Equivalence}

\subsection{Warm-Up: Equivalence between SVM and Non-Negative Lasso}
Before we investigate the ``real'' Lasso problem~(\ref{eq:lasso}) in the next two subsections, we will warm-up by considering the non-negative variant~(\ref{eq:posLasso}).
It is a simple observation that the non-negative Lasso~(\ref{eq:posLasso})\index{non-negative Lasso} is directly equivalent to the dual SVM problem~(\ref{eq:classifier}) by a translation:

\paragraph{Equivalence by Translation.}\index{translation}
Given a non-negative Lasso instance~(\ref{eq:posLasso}), we can translate each column vector of the matrix $A$ by the vector $-b$. Doing so, we precisely obtain an SVM instance~(\ref{eq:classifier}), with the data matrix being 
\[
\tilde A := A - b\one^T \in\R^{d\times n} \ .
\]
%The additive term $b\one^T$ for $\one\in\R^{n}$ just means that the vector $b$ is subtracted from each original column of $A$. 
Here we have crucially used the simplex domain, ensuring that $b\one^Tx = b$ for any $x\in\Simplex$.
To summarize, for those two optimization problems, the described translation precisely preserves all the objective values of all feasible points for both problems~(\ref{eq:posLasso}) and~(\ref{eq:classifier}), that is for all $x\in\Simplex$. %, not only the optimal solutions.
This is why we say that the problems are equivalent.

The reduction in the other direction --- i.e. reducing an SVM instance~(\ref{eq:classifier}) to a non-negative Lasso instance~(\ref{eq:posLasso}) --- 
is trivial by choosing $b:=\zero$.
\\

Now to relate the SVM to the ``real'' Lasso, the same translation idea is of crucial importance. We explain the two reductions in the following subsections.
%Now to relate the SVM~(\ref{eq:classifier}) and the original variant~(\ref{eq:lasso}) of the Lasso, there are two reductions we will detail in the next two subsections, of which the second one is non-trivial.

%
\subsection{\textrm{(Lasso $\preceq$ SVM)}: Given a Lasso Instance, Constructing an Equivalent SVM Instance}\label{sec:LassoLeSVM}\index{equivalent SVM instance}

(This reduction is significantly easier than the other direction.)

\paragraph{Parameterizing the $\ell_1$-Ball as a Convex Hull.}
One of the main properties of polytopes --- if not the main one --- is that every polytope can be represented as the convex hull of its vertices~\cite{Ziegler:1995td}. When expressing an arbitrary point in the polytope as a convex combination of some vertices, this leads to the standard concept of \emph{barycentric} coordinates.

In order to represent the $\ell_1$-ball $\LOneBall$ by a simplex $\Simplex$, this becomes particularly simple.
The $\ell_1$-ball $\LOneBall$ is the convex hull of its $2n$ vertices, which are $\SetOf{\pm\unit_i}{i\in[1..n]}$, illustrating why $\LOneBall$ is also called the cross-polytope.

The barycentric representation of the $\ell_1$-ball therefore amounts to the simple trick of using two non-negative variables to represent each real variable, which is standard for example when writing linear programs in standard form. For $\ell_1$ problems such as the Lasso, this representation was known very early~\cite{Bloomfield:1983tn,Chen:1998hm}.
Formally, any $n$-vector $x_\lOne\in\LOneBall$ can be written as 
\[
x_\lOne = (\id_n \,| {-\id_n}) x_\simpl ~\text{ for }~ x_\simpl\in\Simplex \subset \R^{2n} \ .
\]
Here $x_\simpl$ is a $2n$-vector, and we have used the notation $(A|B)$ for the horizontal concatenation of two matrices $A,B$.

Note that the barycentric representation is \emph{not} a bijection in general, as there can be several $x_\simpl\in\Simplex\subset\R^{2n}$ representing the same point $x_\lOne\in\R^n$.

%any vector $Ax_\lOne$ for $x\in\LOneBall \subseteq \R^n$ can be written as $Ax_\lOne = \tilde A x_\simpl$ for $x_\simpl\in\R^{2n}, x_\simpl\in\Simplex$, by setting $\tilde A := (A \,| -A) \in\R^{d\times 2n}$

\paragraph{The Equivalent SVM Instance.}
Given a Lasso instance of the form~(\ref{eq:lasso}), that is, $\min_{x\in\LOneBall}  \norm{Ax-b}^2$, we can directly parameterize the $\ell_1$-ball by the $2n$-dimensional simplex as described above.
By writing $(\id_n \,| {-\id_n}) x_\simpl$ for any $x\in\LOneBall$, the objective function becomes $\norm{(A \,| {-A}) x_\simpl-b}^2$. This means we have obtained the equivalent non-negative regression problem of the form~(\ref{eq:posLasso}) over the domain $x_\simpl\in\Simplex$ which, by our above remark on translations, is equivalent to the SVM formulation~(\ref{eq:classifier}), i.e.
\[
\min_{x_\simpl\in\Simplex} \norm{\tilde Ax_\simpl}^2 \ ,
\]
where the data matrix is given by
\[
\tilde A := (A \,| {-A}) - b\one^T \in\R^{d\times 2n} \ .
\]
The additive rank-one term $b\one^T$ for $\one\in\R^{2n}$ again just means that the vector $b$ is subtracted from each original column of $A$ and $-A$, resulting in a translation\index{translation} of the problem. So we have obtained an equivalent SVM instance consisting of $2n$ points in $\R^d$.

Note that this equivalence not only means that the optimal solutions of the Lasso and the SVM coincide, but indeed gives us the correspondence of all feasible points, preserving the objective values: for any points solution $x\in\R^{n}$ to the Lasso, we have a feasible SVM point $x_\simpl\in\Simplex\subset\R^{2n}$ of the \emph{same} objective value, and vice versa.

\subsection{\textrm{(SVM $\preceq$ Lasso)}: Given an SVM Instance, Constructing an Equivalent Lasso Instance}\label{sec:SVMLeLasso}\index{equivalent Lasso instance}

This reduction is harder to accomplish than the other direction we explained before. Given an instance of an SVM problem~(\ref{eq:classifier}), we suppose that we have a (possibly non-optimal) $\sigma$-weakly-separating\index{weakly-separating} vector $w\in\R^d$ available, for some (small) value $\sigma>0$.
Given $w$, we will demonstrate in the following how to construct an equivalent Lasso instance~(\ref{eq:lasso}).

Perhaps surprisingly, such a weakly-separating vector $w$ is trivial to obtain for the $\ell_2$-loss soft-margin SVM, as we have observed in Section~\ref{sec:softSVM} (even if the SVM input data is not separable).
Also for hard-margin SVM variants, finding such a weakly-separating vector for a small $\sigma$ is still significantly easier than the final goal of obtaining a near-perfect $(\sigma^*-\varepsilon)$-separation for a small precision $\varepsilon$.
It corresponds to running an SVM solver (such as the perceptron algorithm) for only a constant number of iterations. In contrast, obtaining a better $\varepsilon$-accurate solution by the same algorithm would require $O(1/\varepsilon^2)$ iterations, as mentioned in Section~\ref{sec:SVMproperties}.
%TODO: rewrite the above crap paragraph

\paragraph{The Equivalent Lasso Instance.}
Formally, we define the Lasso instance~$(\tilde A,\tilde b)$ as the translated SVM datapoints \[
\tilde A := \SetOf{A_i + \tilde b}{i\in[1..n]}
\]
together with the right hand side 
\[
\tilde b:= - \frac{w}{\norm{w}}\cdot\frac{D^2}{\sigma} \ .
\]
%The mirror image of the points $\tilde A$ is the set $-\tilde A := \SetOf{-A_i - \tilde b}{i\in[1..n]}$.  %(who cares?)

Here $D>0$ is a strict upper bound on the length of the original SVM datapoints, i.e. $\norm{A_i}<D\ \ \forall i$.

By definition of $\tilde A$, the resulting new Lasso objective function is 
\begin{equation}\label{eq:lassoObj}
\norm{\tilde Ax-\tilde b}
~=~\norm{(A+\tilde b\one^T)x-\tilde b}
~=~\norm{Ax+(\one^Tx-1)\tilde b} \ .\!\!
\end{equation}
%\begin{equation}\label{eq:lassoObj}
%\begin{array}{rl}
%\norm{\tilde Ax-\tilde b} =&~~\norm{(A+\tilde b\one^T)x-\tilde b}\\
%=&\!\!\norm{Ax+(\one^Tx-1)\tilde b} \ .
%\end{array}
%\end{equation}
Therefore, this objective coincides with the original SVM objective~(\ref{eq:classifier}), for any $x\in\Simplex$ (meaning that $\one^Tx=1$). However, this does not necessarily hold for the larger part of the Lasso domain when $x \in \LOneBall\setminus\Simplex$.
In the following discussion and the main Theorem~\ref{thm:LassoForSVM}, we will prove that all those candidates $x \in \LOneBall\setminus\Simplex$ can be discarded from the Lasso problem, as they do not contribute to any optimal solutions.

As a side-remark, we note that the quantity $\frac{D}{\sigma}$ that determines the magnitude of our translation is a known parameter in the SVM literature.
\cite{Burges:1998hg,Scholkopf:2002ta} have shown that the VC-dimension of an SVM, a measure of ``difficulty'' for the classifier, is always lower than $\frac{D^2}{\sigma^2}$.
%, essentially its VC-dimension~\cite{Burges:1998hg,Scholkopf:2002ta}.
%Burges:1998hg   Theorem 6 shows that D^2/d^2 is an upper bound for the VC dimension
Note that by the definition of separation, $\sigma\leq D$ always holds.

%\paragraph{Polytopes.}
%We use the notation $AS:= \SetOf{Ax}{x\in S}$ for subsets $S\subseteq\R^d$.
%We denote the convex hull of a set $S$ by $\conv(S)$. Since linear maps do preserve convexity, $\conv(AS) = A\conv(S)$, and therefore our two sets of main interest $A\Simplex = \conv(\SetOf{A_i}{i\in[1..n]})$ and $A\LOneBall = \conv(\SetOf{\pm A_i}{i\in[1..n]})$ are polytopes given by the vertices $A_i$ and $\pm A_i$ respectively. %TODO: where exactly do we need this?

\paragraph{Geometric Intuition.}\index{geometric interpretation}
Geometrically, the Lasso problem~(\ref{eq:lasso}) is to compute the smallest Euclidean distance of the set $A\LOneBall$ to the point $b\in\R^d$. 
On the other hand the SVM problem --- after translating by $b$ --- is to minimize the distance of the smaller set $A\Simplex \subset A\LOneBall$ to the point $b$.
Here we have used the notation $AS:= \SetOf{Ax}{x\in S}$ for subsets $S\subseteq\R^d$ and linear maps $A$ (it is easy to check that linear maps do preserve convexity of sets, so that $\conv(AS) = A\conv(S)$).

Intuitively, the main idea of our reduction is to mirror our SVM points~$A_i$ at the origin, so that both the points and their mirrored copies --- and therefore the entire larger polytope~$A\LOneBall$ --- do end up lying ``behind'' the separating SVM margin. 
The hope is that the resulting Lasso instance will have all its optimal solutions being non-negative, and lying in the simplex. 
Surprisingly, this can be done, and we will show that all SVM solutions are preserved (and no new solutions are introduced) when the feasible set $\Simplex$ is extended to~$\LOneBall$.
In the following we will formalize this precisely, and demonstrate how to translate along our known weakly-separating vector $w$ so that the resulting Lasso problem %$\min_{y\in A\LOneBall}\,\norm{y-b}$ 
will have the same solution as the original SVM.
%more strongly, for every candidate solution $x$ that has some negative entries, there is a non-negative solution in the simplex of a better objective value.
%MORE Idea: In this formulation, we hope to see that ``any negative coefficient would be so bad that deleting or swapping it would improve the objective (all lines leading to $A\Simplex$ have their optimum there)''. 

\paragraph{Properties of the Constructed Lasso Instance.}
The following theorem shows that for our constructed Lasso instance, all interesting feasible points are contained in the simplex $\Simplex$. By our previous observation~(\ref{eq:lassoObj}), we already know that all those candidates are feasible for both the Lasso~(\ref{eq:lasso}) and the SVM~(\ref{eq:classifier}), and obtain the same objective values in both problems.

In other words, we have a one-to-one correspondence between all feasible points for the SVM~(\ref{eq:classifier}) on one hand, and the subset $\Simplex \subset \LOneBall$ of feasible points of our constructed Lasso instance~(\ref{eq:lasso}), preserving all objective values. 
Furthermore, we have that in this Lasso instance, all points in $\LOneBall\setminus\Simplex$ are strictly worse than the ones in $\Simplex$.
Therefore, we have also shown that all optimal solutions must coincide.

\begin{theorem}\label{thm:LassoForSVM}
For any candidate solution $x_\lOne\in\LOneBall$ to the Lasso problem~(\ref{eq:lasso}) defined by $(\tilde A,\tilde b)$, 
there is a feasible vector $x_\simpl\in\Simplex$ in the simplex, of the same or better Lasso objective value $\gamma$.\\
Furthermore, this $x_\simpl\in\Simplex$ attains the same objective value $\gamma$ in the original SVM problem~(\ref{eq:classifier}).
\end{theorem}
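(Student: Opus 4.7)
The plan is to show that any minimizer $x^*$ of the SVM problem $\min_{x\in\Simplex}\norm{Ax}_2^2$ is simultaneously a global minimizer of $f(x):=\norm{\tilde Ax-\tilde b}_2^2$ over the larger Lasso domain~$\LOneBall$. Granting this, the theorem is immediate: for any candidate $x_\lOne\in\LOneBall$, setting $x_\simpl:=x^*$ gives $f(x_\simpl)\le f(x_\lOne)$ by optimality, and by identity~(\ref{eq:lassoObj}) together with $\one^T x^*=1$ the common value $\gamma:=f(x^*)=\norm{Ax^*}_2^2$ is simultaneously the Lasso value at $x_\simpl$ and its SVM value.

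The key step is a KKT verification at $x^*$. Since $x^*\in\Simplex$, one computes $\tilde Ax^*-\tilde b=(A+\tilde b\one^T)x^*-\tilde b=Ax^*$, so that $\nabla f(x^*)=2A^T(Ax^*)+2(\tilde b^T Ax^*)\one$. I propose the Lasso Lagrange multiplier $\mu:=-2c-2\tilde b^T Ax^*$ for the constraint $\norm{x}_1\le1$, where $c:=(\sigma^*)^2=\norm{Ax^*}_2^2$. Because $f$ is convex and $\LOneBall$ is convex, it suffices to verify primal feasibility (immediate from $\norm{x^*}_1=1$), complementary slackness (also immediate), and the stationarity conditions
\[
(\nabla f(x^*))_k=-\mu \text{ for } x_k^*>0,\qquad |(\nabla f(x^*))_k|\le\mu \text{ for } x_k^*=0,\qquad \mu\ge 0.
\]

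Writing $\hat w:=w/\norm{w}_2$ and using $\tilde b=-(D^2/\sigma)\hat w$, the weak-separation property $A_i^T\hat w\ge\sigma$ combined with $x^*\in\Simplex$ gives $\hat w^T Ax^*=\sum_i x_i^*(A_i^T\hat w)\ge\sigma$, hence $-\tilde b^T Ax^*\ge D^2$; together with $c\le\min_j\norm{A_j}_2^2<D^2$ this gives $\mu\ge 2(D^2-c)>0$. The active-set equation $(\nabla f(x^*))_k=-\mu$ is a direct substitution of the SVM KKT equality $A_k^T Ax^*=c$ into the formula for $\mu$. The lower half $(\nabla f(x^*))_k\ge-\mu$ of the off-support condition reduces exactly to the SVM KKT inequality $A_k^T Ax^*\ge c$. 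The upper half $(\nabla f(x^*))_k\le\mu$ becomes, after algebraic cancellation,
\[
A_k^T Ax^* + c \;\le\; 2(D^2/\sigma)\,\hat w^T Ax^*,
\]
whose right-hand side is at least $2D^2$ by weak separation, while Cauchy--Schwarz gives $A_k^T Ax^*\le D\sigma^*$ and the strict bound $\sigma^*<D$ (inherited from $\norm{A_i}_2<D$) gives $c=(\sigma^*)^2\le D\sigma^*$, so the left-hand side is at most $2D\sigma^*\le 2D^2$.

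The main obstacle will be precisely this off-support upper bound: it is the only place in the proof that exploits the exact magnitude $\norm{\tilde b}_2=D^2/\sigma$ of the translation, and a smaller magnitude would invalidate it. This is also what forces the reduction to require $D$ to be a \emph{strict} upper bound on $\norm{A_i}_2$; every other step is just substitution of SVM KKT into Lasso KKT, modulated by the weak-separation inequality.
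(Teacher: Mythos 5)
Your proof is correct, but it takes a genuinely different route from the paper's. The paper argues constructively and per candidate: given an arbitrary $x_{\lOne}\in\LOneBall$, it first flips the negative coordinates (Proposition~\ref{prop:Flipping}) and then scales the result up to the face $\norm{x}_1=1$ (Proposition~\ref{prop:Scaling}), showing that each move strictly decreases the objective. Both steps reduce to the single inequality $(\tilde Ax-\tilde b)^T(-\tilde A\delta)>0$ for $x,\delta\in\FilledSimplex$ (Lemma~\ref{lem:tecInnerPositive}), proved geometrically: the magnitude $D^2/\sigma$ of the translation places the ball of radius $D$ around $\tilde b$ inside a cone around $w$ narrow enough that every vector of $-\tilde A\FilledSimplex$ is itself weakly separating. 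You instead verify the Lasso KKT conditions over $\LOneBall$ at an SVM optimizer $x^*$, with the explicit multiplier $\mu=-2c-2\tilde b^TAx^*>0$, and then use $x^*$ as a single universal witness for every candidate $x_{\lOne}$; this does establish the theorem as literally stated, and your computations check out (the active-set equality and the lower bound are exactly the simplex optimality conditions $A_k^TAx^*=c$ resp.\ $\ge c$, the bound $c\le\min_j\norm{A_j}_2^2<D^2$ gives $\mu>0$, and your off-support upper bound $A_k^TAx^*+c\le 2(D^2/\sigma)\,w^TAx^*/\norm{w}_2$ plays precisely the role of the paper's cone lemma --- it is in both proofs the one place where the magnitude of $\tilde b$ and the strictness of $\norm{A_i}_2<D$ are used). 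The trade-off: the paper's version yields a per-candidate, \emph{strictly} improving map from $\LOneBall$ into $\Simplex$, which is what lets one round a suboptimal Lasso iterate returned by a solver into a simplex point of no worse value and shows directly that no Lasso optimum lies outside $\Simplex$ (supporting the surrounding one-to-one-correspondence claims); your argument as written only equates the optimal values over $\Simplex$ and $\LOneBall$, although pushing the strictness of your off-support bound through complementary slackness would recover the exclusion of optima outside the simplex. What your route buys is brevity and a sharp isolation of exactly which inequality forces the choice $\norm{\tilde b}_2=D^2/\sigma$.
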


On the other hand, every $x_\simpl\in\Simplex$ %having some SVM objective value $\gamma(x)$, 
is of course also feasible for the Lasso, and attains the same objective value there, again by~(\ref{eq:lassoObj}).

\begin{proof}%[Proof of Theorem \ref{thm:LassoForSVM}]
The proof follows directly from the two main facts given in Propositions~\ref{prop:Flipping} and~\ref{prop:Scaling} below, which state that ``flipping negative signs improves the objective'', and that ``scaling up improves for non-negative vectors'', respectively.
We will see below as why these two facts hold, which is precisely by the choice of the translation $\tilde b$ along a weakly separating vector $w$, in order to define our Lasso instance.

We assume that the given $x_\lOne$ does not already lie in the simplex. Now by applying 
Propositions~\ref{prop:Flipping} and \ref{prop:Scaling}, we obtain $x_\simpl\in\Simplex$, of a strictly better objective value $\gamma$ for problem~(\ref{eq:posLasso}). 
By the observation~(\ref{eq:lassoObj}) about the Lasso objective, we know that the original SVM objective value attained by this $x_\simpl$ is equal to~$\gamma$.
\end{proof}

\begin{proposition}[Flipping negative signs improves the objective]\label{prop:Flipping}
Consider the Lasso problem~(\ref{eq:lasso}) defined by $(\tilde A,\tilde b)$, and assume that $x_\lOne\in\LOneBall$ has some negative entries. %is a candidate solution with some negative entries.

Then there is a strictly better solution $x_{\fSimpl}\in\FilledSimplex$ having only non-negative entries.
\end{proposition}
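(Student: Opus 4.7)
The plan is to take $x_{\fSimpl} := |x_{\lOne}|$ (entrywise absolute value) as the improved candidate. Feasibility in $\FilledSimplex$ is immediate, since $x_{\fSimpl} \ge \zero$ and $\norm{x_{\fSimpl}}_1 = \norm{x_{\lOne}}_1 \le 1$. What remains is to show that the Lasso objective strictly decreases under this replacement whenever $x_{\lOne}$ has a negative entry.

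Write $x_{\lOne} = x^+ - x^-$ with non-negative $x^+, x^-$ of disjoint support, so that $|x_{\lOne}| = x^+ + x^-$, and set $P := \tilde A x^+ - \tilde b$ and $Q := \tilde A x^-$. Then $\tilde A x_{\lOne} - \tilde b = P-Q$ while $\tilde A |x_{\lOne}| - \tilde b = P+Q$, and the parallelogram identity gives
\[
\norm{\tilde A |x_{\lOne}| - \tilde b}_2^2 - \norm{\tilde A x_{\lOne} - \tilde b}_2^2 \;=\; 4\,P^T Q,
\]
so the whole proposition reduces to the single inequality $P^T Q < 0$.

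To prove this, expand $P^T Q$ using $\tilde A_i = A_i + \tilde b$ and introduce $\alpha := \one^T x^+ \ge 0$ and $\beta := \one^T x^- > 0$, where $\alpha + \beta \le 1$ forces $\alpha < 1$. One obtains the four-term expansion
\[
P^T Q = (Ax^+)^T(Ax^-) + \beta\,(Ax^+)^T \tilde b - (1-\alpha)\,\tilde b^T(Ax^-) - (1-\alpha)\beta\,\norm{\tilde b}_2^2.
\]
The three inputs to feed in are $\norm{A_i}_2 < D$, the weakly-separating bound $A_i^T w/\norm{w}_2 \ge \sigma$, and the choice $\tilde b = -(D^2/\sigma)\,w/\norm{w}_2$, which together give $A_i^T \tilde b \le -D^2$ and $\norm{\tilde b}_2 = D^2/\sigma$. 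Combined with Cauchy--Schwarz one sees $(Ax^+)^T(Ax^-) \le \norm{Ax^+}_2\norm{Ax^-}_2 < D^2 \alpha\beta$ and $\beta(Ax^+)^T\tilde b \le -D^2\alpha\beta$, so the first two terms sum to at most zero. For the last two, $-\tilde b^T(Ax^-) \le (D^3/\sigma)\,\beta$ together with $\norm{\tilde b}_2^2 = D^4/\sigma^2$ yields
\[
-(1-\alpha)\,\tilde b^T(Ax^-) - (1-\alpha)\beta\norm{\tilde b}_2^2 \;\le\; -(1-\alpha)\beta\,\frac{D^3(D-\sigma)}{\sigma^2},
\]
which is strictly negative because $\beta>0$, $\alpha<1$, and $D>\sigma$ (since $\sigma \le \norm{A_i}_2 < D$). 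Summing, $P^T Q < 0$ strictly.

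The main difficulty is purely the bookkeeping of the four cross-terms of mixed sign: one has to upper-bound $(Ax^+)^T\tilde b$ using the weakly-separating inequality while lower-bounding $\tilde b^T(Ax^-)$ using only $\norm{A_j}_2 < D$, so that the positive and negative contributions cancel correctly. Conceptually the argument closes precisely because the translation magnitude $D^2/\sigma$ was chosen so that $\norm{\tilde b}_2^2 = D^4/\sigma^2$ dominates the worst-case cross-terms of order $D^3/\sigma$ coming from $\tilde b^T(Ax^-)$, a scaling that mirrors the VC-dimension--like hardness parameter $D/\sigma$ of the underlying SVM.
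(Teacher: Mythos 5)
Your proof is correct, and its first half is identical to the paper's: replacing $x_{\lOne}$ by its entrywise absolute value and reducing everything to a single inner-product inequality is exactly the paper's step, since with its notation $\delta=x^-$ and $x_{\lOne}+\delta=x^+$, the quantity $4(\tilde A(x_{\lOne}+\delta)-\tilde b)^T\tilde A\delta$ it arrives at is precisely your $4P^TQ$. Where you genuinely diverge is in establishing $P^TQ<0$. The paper delegates this to a geometric separation lemma (Lemma~\ref{lem:tecInnerPositive}, via Lemma~\ref{lem:sepCone}): the translation $\tilde b=-\frac{w}{\norm{w}_2}\cdot\frac{D^2}{\sigma}$ places the ball of radius $D$ around $\tilde b$, and hence $-\tilde A\FilledSimplex$, inside a cone around $w$ narrow enough that $-\tilde A\delta$ is itself a weakly-separating direction, and therefore has strictly positive inner product with $\tilde Ax-\tilde b$ for every $x\in\FilledSimplex$. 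You instead expand $P^TQ$ into four terms via $\tilde A=A+\tilde b\one^T$ and bound them directly using Cauchy--Schwarz, the margin bound $A_i^T\tilde b\le-D^2$, and $\norm{\tilde b}_2=D^2/\sigma$; the bookkeeping is sound, including the strictness coming from $\beta>0$, $\alpha<1$, and $D>\sigma$ (the only cosmetic slip is that $(Ax^+)^T(Ax^-)<D^2\alpha\beta$ should be non-strict when $\alpha=0$, which does not affect the conclusion that the first two terms sum to at most zero). Your route is more elementary and self-contained, and it makes explicit exactly which powers of $D/\sigma$ must dominate which cross-terms --- the $D^4/\sigma^2$ term absorbing the $D^3/\sigma$ ones --- a scaling that the cone picture hides. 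What the paper's formulation buys is reusability: Lemma~\ref{lem:tecInnerPositive} is stated for arbitrary $x,\delta\in\FilledSimplex$ and is invoked verbatim a second time in the proof of Proposition~\ref{prop:Scaling}, whereas your computation exploits $\alpha+\beta\le1$, which is specific to the flipping step, and would have to be rerun with slightly different bookkeeping to cover the scaling step as well.
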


\begin{proof}%[Proof of Proposition~\ref{prop:Flipping}]
We are given $x_\lOne\ne0$, having at least one negative coordinate. Define $x_{\fSimpl}\ne\zero$ as the vector you get by flipping all the negative coordinates in $x_\lOne$. We define $\delta \in \FilledSimplex$ to be the difference vector corresponding to this flipping, i.e. $\delta_i := -(x_\lOne)_i$ if $(x_\lOne)_i<0$, and $\delta_i :=0$ otherwise, so that  $x_{\fSimpl} := x_\lOne+2\delta$ gives $x_{\fSimpl}\in\FilledSimplex$.
We want to show that with respect to the quadratic objective function, $x_{\fSimpl}$ is strictly better than $x_\lOne$. We do this by showing that the following difference in the objective values is strictly negative:
\begin{eqnarray*}
\norm{\tilde Ax_{\fSimpl} - \tilde b}^2 - \norm{\tilde Ax_\lOne - \tilde b}^2
&=& \norm{c+d}^2 - \norm{c}^2 \\
&=&  c^Tc + 2c^Td + d^Td - c^Tc
\ =\   (2c+d)^Td\\
&=&  4 (\tilde Ax_\lOne - \tilde b +  \tilde A\delta)^T \tilde A\delta\\
&=&  4 (\tilde A(x_\lOne+\delta) - \tilde b)^T \tilde A\delta
\end{eqnarray*}
where in the above calculations we have used that $\tilde Ax_{\fSimpl} = \tilde Ax_\lOne +2\tilde A\delta$, and we substituted $c:= \tilde Ax_\lOne - \tilde b$ and $d:= 2\tilde A\delta$.
Interestingly, $x_\lOne+\delta \in \FilledSimplex$, since this addition just sets all previously negative coordinates to zero.

The proof then follows from Lemma~\ref{lem:tecInnerPositive} below.
\end{proof}

\begin{proposition}[Scaling up improves for non-negative vectors]\label{prop:Scaling}
Consider the Lasso problem~(\ref{eq:lasso}) defined by~$(\tilde A,\tilde b)$, and assume that $x_{\fSimpl}\in\FilledSimplex$ has $\norm{x_{\fSimpl}}_1<1$. %is a candidate solution with $\norm{x_{\fSimpl}}_1<1$.

Then we obtain a strictly better solution $x_\simpl\in\Simplex$ by linearly scaling $x_{\fSimpl}$.
\end{proposition}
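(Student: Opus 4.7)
Let $s := \one^T x_\fSimpl = \norm{x_\fSimpl}_1 \in (0,1]$ (the degenerate case $x_\fSimpl = \zero$ admits no multiplicative rescaling into $\Simplex$, so I assume $s>0$). Set the candidate $x_\simpl := x_\fSimpl / s \in \Simplex$ and consider the restriction of the Lasso objective to the ray through $x_\fSimpl$, namely $f(\alpha) := \norm{\tilde A(\alpha x_\fSimpl) - \tilde b}_2^2$. The claim reduces to showing $f(1/s) < f(1)$. Using the construction $\tilde A = A + \tilde b\,\one^T$ of the Lasso instance, the residual rewrites cleanly as $\tilde A(\alpha x_\fSimpl) - \tilde b = \alpha A x_\fSimpl + (\alpha s - 1)\tilde b$, exhibiting $f$ as an explicit univariate quadratic in $\alpha$.

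The decisive cancellation is that at $\alpha = 1/s$ the coefficient $(\alpha s - 1)$ vanishes, so $f(1/s) = \norm{A x_\fSimpl}_2^2 / s^2$, while $f(1) = \norm{A x_\fSimpl + (s-1)\tilde b}_2^2 = \norm{A x_\fSimpl}_2^2 + 2(s-1)(A x_\fSimpl)^T \tilde b + (s-1)^2 \norm{\tilde b}_2^2$. I would form the difference $f(1/s) - f(1)$, factor out the positive scalar $(1-s)$, and thereby reduce the goal to the single algebraic inequality that $\norm{A x_\fSimpl}_2^2 (1+s)/s^2 + 2(A x_\fSimpl)^T \tilde b - (1-s)\norm{\tilde b}_2^2$ is strictly negative.

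I would then bound the three summands using the calibrated choice of $\tilde b$. First, the triangle inequality combined with the strict bound $\norm{A_i}_2 < D$ gives $\norm{A x_\fSimpl}_2 < sD$, so the first summand is strictly less than $D^2(1+s)$. Second, the $\sigma$-weakly-separating property of $w$ together with $x_\fSimpl \geq \zero$ yields $(A x_\fSimpl)^T \tilde b = -(D^2/\sigma) \sum_i x_i\, A_i^T w/\norm{w}_2 \leq -D^2 s$, so the second summand is at most $-2D^2 s$. Third, $\norm{\tilde b}_2^2 = D^4/\sigma^2$ by the definition of $\tilde b$.

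The main obstacle is orchestrating these three estimates so that they interlock in precisely the right direction. Substituting produces the compact upper bound $(1-s)D^2\bigl(1 - D^2/\sigma^2\bigr)$, which is strictly negative because Cauchy--Schwarz applied to the weakly-separating inequality gives $\sigma \leq A_i^T w/\norm{w}_2 \leq \norm{A_i}_2 < D$, hence $D^2/\sigma^2 > 1$. This proves $f(1/s) < f(1)$ as required. What makes the argument go through is the deliberate choice of translation magnitude $D^2/\sigma$ in the definition of $\tilde b$: it is quadratic in $D/\sigma$ and therefore inflates $\norm{\tilde b}_2^2$ enough to dominate the first two terms, punishing any $x \in \FilledSimplex$ that fails to saturate the simplex constraint.
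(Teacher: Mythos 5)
Your proof is correct, but it takes a genuinely different route from the paper's. The paper's own argument writes the objective difference as $(2c+d)^T d$ with $c := \tilde A x_{\fSimpl} - \tilde b$ and $d := \lambda' \tilde A x_{\fSimpl}$, reduces the claim to the sign of $(\tilde A y - \tilde b)^T \tilde A x_{\fSimpl}$ for the intermediate point $y := \big(1+\tfrac{\lambda'}{2}\big)x_{\fSimpl} \in \FilledSimplex$, and then invokes the geometric Lemma~\ref{lem:tecInnerPositive}: since $\norm{\tilde b}_2 = D^2/\sigma$, the ball of radius $D$ around $\tilde b$ --- and hence $-\tilde A\FilledSimplex$ --- lies in a cone around $w$ all of whose elements are still weakly separating (Lemma~\ref{lem:sepCone}), which yields $(\tilde A y - \tilde b)^T\tilde A\delta < 0$ for all $y,\delta\in\FilledSimplex$, $\delta\ne\zero$. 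That single lemma is shared with Proposition~\ref{prop:Flipping}, so the paper gets both propositions from one cone-separation fact. You instead restrict the objective to the ray $\alpha \mapsto \alpha x_{\fSimpl}$, exploit the cancellation $\tilde A(\alpha x_{\fSimpl}) - \tilde b = \alpha A x_{\fSimpl} + (\alpha s - 1)\tilde b$, and close the argument with the three explicit estimates $\norm{Ax_{\fSimpl}}_2 < sD$, $(Ax_{\fSimpl})^T\tilde b \le -D^2 s$ and $\norm{\tilde b}_2^2 = D^4/\sigma^2$; I checked the algebra and the factored bound $(1-s)D^2\big(1 - D^2/\sigma^2\big) < 0$ on the bracketed quantity is right (using $\sigma < D$, which Cauchy--Schwarz guarantees). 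Both arguments hinge on the same design decision, namely the translation magnitude $D^2/\sigma$, but yours is self-contained and quantitative --- it gives an explicit lower bound on the improvement --- while the paper's is qualitative and buys reusability across both propositions. Your explicit exclusion of $x_{\fSimpl}=\zero$ matches the paper's implicit assumption $x_{\fSimpl}\ne\zero$, so no new gap is introduced there.
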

\begin{proof}%[Proof of Proposition~\ref{prop:Scaling}]
The proof follows along similar lines as the above proposition. We are given $x_{\fSimpl}\ne0$ with $\norm{x_{\fSimpl}}_1<1$. 
Define $x_\simpl$ as the vector we get by scaling up $x_\simpl := \lambda x_{\fSimpl}$ by $\lambda>1$ such that $\norm{x_\simpl}_1=1$. 
We want to show that with respect to the quadratic objective function, $x_\simpl$ is strictly better than~$x_{\fSimpl}$. As in the previous proof, we again do this by showing that the following difference in the objective values is strictly negative:
\begin{eqnarray*}
\norm{\tilde Ax_\simpl - \tilde b}^2 - \norm{\tilde Ax_{\fSimpl} - \tilde b}^2
&=& \norm{c+d}^2 - \norm{c}^2 \\
&=&  c^Tc + 2c^Td + d^Td - c^Tc
\ =\   (2c+d)^Td\\
&=&  \lambda' (2\tilde Ax_{\fSimpl} - 2b +  \lambda'\tilde Ax_{\fSimpl})^T \tilde Ax_{\fSimpl}\\
&=&  \textstyle2 \lambda' (\tilde A\big(1+\frac{\lambda'}{2}\big)x_{\fSimpl} - \tilde b)^T \tilde Ax_{\fSimpl}
\end{eqnarray*}
where in the above calculations we have used that $\tilde Ax_\simpl = \lambda\tilde Ax_{\fSimpl}$ for $\lambda>1$, and we substituted $c:= \tilde Ax_{\fSimpl} - \tilde b$ and $d:= \tilde Ax_\simpl - \tilde Ax_{\fSimpl} = (\lambda-1)\tilde Ax_{\fSimpl} =: \lambda'\tilde Ax_{\fSimpl}$ for $\lambda' := \lambda-1 >0$.
Note that $x_\simpl := (1+\lambda') x_{\fSimpl} \in\Simplex$ so $\big(1+\frac{\lambda'}{2}\big)x_{\fSimpl} \in\FilledSimplex$.

The proof then follows from Lemma~\ref{lem:tecInnerPositive} below.
\end{proof}

\begin{definition}
For a given axis vector $w\in\R^d$, the \emph{cone}\index{cone} with axis $w$, angle $\alpha\in(0,\frac\pi2)$ with tip at the origin is defined as $\cone(w,\alpha) := \SetOf{x\in\R^d}{\measuredangle(x,w)\le\alpha}$, or equivalently $\frac{x^Tw}{\norm{x}\norm{w}}\ge \cos\alpha $.
By $\intcone(w,\alpha)$ we denote the interior of the convex set $\cone(w,\alpha)$, including the tip $\zero$.
\end{definition}

\begin{figure}[htb]
\centerline{
\includegraphics[width=0.7\textwidth]{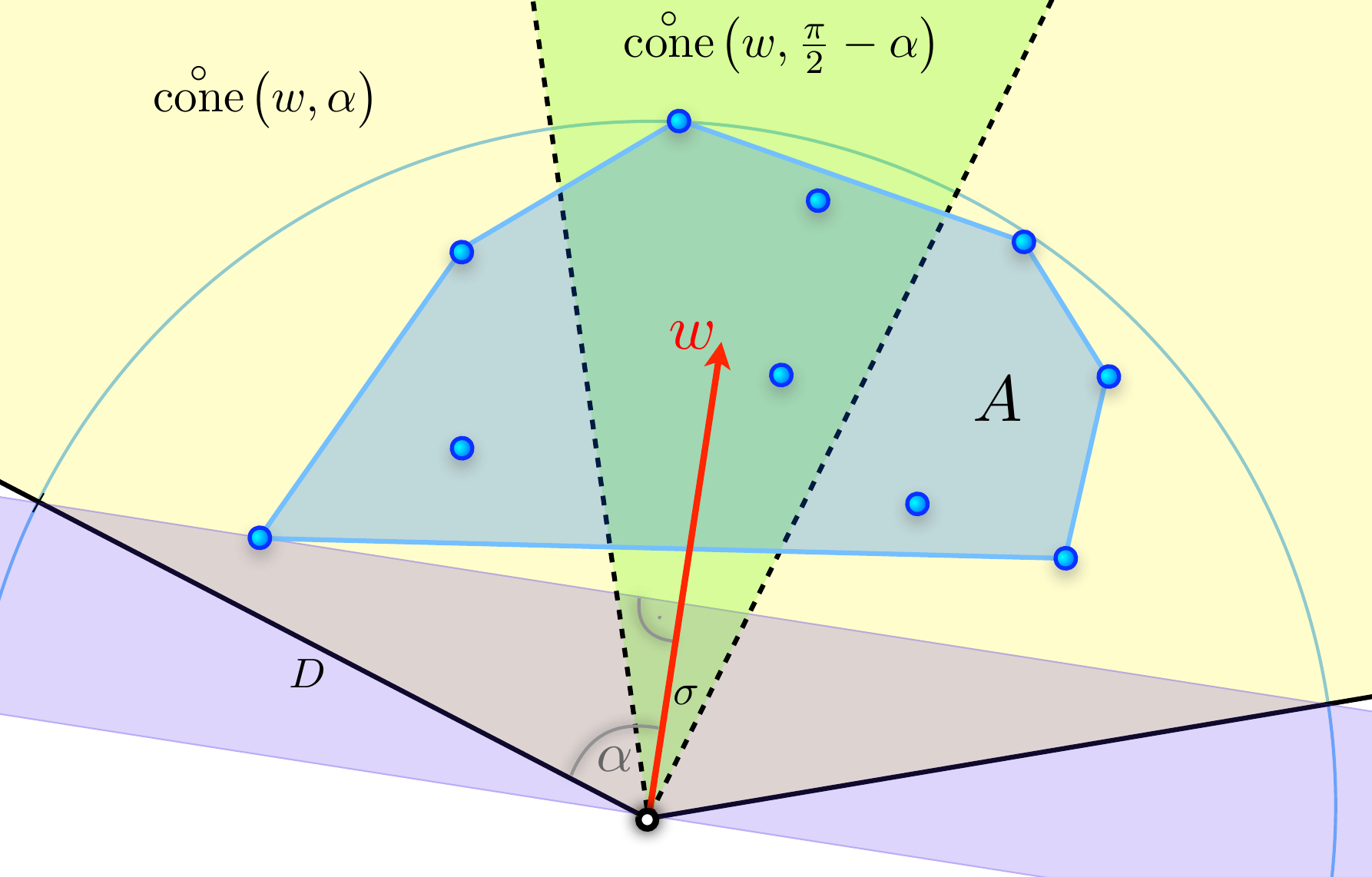}
}
\caption[Cone of Separating Vectors]{Illustration of the separation idea from Lemma \ref{lem:sepCone}, showing the cone of vectors that are still weakly-separating for the set of points $A$. Here we used the angle $\alpha := \arccos(\frac{\sigma}{D})$.}
\label{fig:sepCone}
\end{figure}

\begin{lemma}[Separation]\label{lem:sepCone}
Let $w$ be some $\sigma$-weakly-separating vector for the SVM~(\ref{eq:classifier}) for $\sigma>0$.
Then 
\begin{enumerate}
  \item[i)] $A\FilledSimplex \subseteq \intcone(w,\arccos(\frac{\sigma}{D}))$
  \item[ii)] Any vector in $\cone(w,\arcsin(\frac{\sigma}{D}))$ is still $\sigma'$-weakly-separating for $A$ for some $\sigma'>0$.
\end{enumerate}
\end{lemma}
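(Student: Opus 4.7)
The plan is to prove both parts by direct angle calculations, exploiting the two quantitative hypotheses we have: the separation bound $A_i^T w / \norm{w}_2 \ge \sigma$ and the strict norm bound $\norm{A_i}_2 < D$.

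For part (i), I would write any $x \in \FilledSimplex$ as $x = \sum_i x_i \unit_i$ with $x_i \ge 0$ and $s := \sum_i x_i \le 1$. The case $s=0$ is immediate since then $Ax = \zero$, the tip of the cone, which is included in $\intcone$ by definition. When $s>0$, linearity and the weak-separation hypothesis give $w^T Ax / \norm{w}_2 = \sum_i x_i \bigl(A_i^T w/\norm{w}_2\bigr) \ge \sigma s$, while the triangle inequality combined with $\norm{A_i}_2 < D$ gives $\norm{Ax}_2 \le \sum_i x_i \norm{A_i}_2 < D s$. Dividing yields the strict bound $\cos(\measuredangle(Ax,w)) > \sigma/D$, placing $Ax$ strictly inside the cone of half-angle $\arccos(\sigma/D)$.

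For part (ii), the key tool is the spherical triangle inequality for angles between vectors in Euclidean space: $\measuredangle(w', A_i) \le \measuredangle(w', w) + \measuredangle(w, A_i)$. Applying the argument of part (i) to the single-vector case (the simplex vertex $\unit_i$) gives $\cos(\measuredangle(w, A_i)) \ge \sigma/\norm{A_i}_2 > \sigma/D$ strictly for every $i$, i.e.\ $\measuredangle(w, A_i) < \arccos(\sigma/D)$. Since there are only finitely many $A_i$, each individual strict bound lifts to a uniform strict bound: there is a common $\alpha^* < \arccos(\sigma/D)$ with $\measuredangle(w, A_i) \le \alpha^*$ for all $i$. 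Combining this with $\measuredangle(w', w) \le \arcsin(\sigma/D)$ and the elementary identity $\arcsin(\sigma/D) + \arccos(\sigma/D) = \pi/2$, I obtain $\measuredangle(w', A_i) \le \arcsin(\sigma/D) + \alpha^* < \pi/2$ uniformly in $i$. Setting $\sigma' := \min_i \cos\bigl(\measuredangle(w', A_i)\bigr) > 0$ then gives the required $\sigma'$-weak-separation.

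The main subtle point is the strictness in part (ii): the identity $\arcsin(x) + \arccos(x) = \pi/2$ makes the chosen cone opening $\arcsin(\sigma/D)$ \emph{exactly tight}, so a na\"ive application of the angle triangle inequality only yields $\measuredangle(w', A_i) \le \pi/2$, which is not enough to conclude $\sigma' > 0$. The argument must therefore exploit the strict norm bound $\norm{A_i}_2 < D$ together with the finiteness of the point set to compress the pointwise strict inequalities into a single uniform strict inequality $\alpha^* < \arccos(\sigma/D)$. Everything else is a routine unpacking of the definitions of $\cone$ and $\intcone$ and of $\sigma$-weak-separation.
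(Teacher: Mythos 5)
Your proof is correct and follows essentially the same route as the paper: part (i) is the same direct computation from Definition~\ref{def:weaklySep} and $\norm{A_i}_2<D$, and part (ii) rests on the spherical triangle inequality together with $\arcsin(\sigma/D)+\arccos(\sigma/D)=\pi/2$, which is exactly the contrapositive form of the paper's argument (the paper observes that any vector with non-positive inner product with $v$ must make angle at least $\pi/2-\arcsin(\sigma/D)=\arccos(\sigma/D)$ with $w$, while all $A_i$ lie strictly inside that cone). One cosmetic correction: since $\sigma'$-weak-separation requires $A_i^T w'/\norm{w'}_2\ge\sigma'$ rather than a bound on the cosine alone, you should take $\sigma':=\min_i \norm{A_i}_2\cos\bigl(\measuredangle(w',A_i)\bigr)$, which is still positive because $\sigma>0$ forces every $A_i\ne\zero$.
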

\begin{proof}
i) Definition \ref{def:weaklySep} of weakly separating, and using that $\norm{A_i} < D$.

ii) For any unit length vector $v\in\cone(w,\arcsin(\frac{\sigma}{D}))$, every other vector having a zero or negative inner product with this $v$ must have angle at least $\frac\pi2-\arcsin(\frac{\sigma}{D}) = \arccos(\frac{\sigma}{D})$ with the cone axis $w$. However, by using i), we have $A\Simplex \subseteq \intcone(w,\arccos(\frac{\sigma}{D}))$, so every column vector of~$A$ must have strictly positive inner product with $v$, or in other words $v$ is $\sigma'$-weakly-separating for $A$ (for some $\sigma'>0$).
\end{proof}

\begin{figure}[htb]
\centerline{
\includegraphics[width=0.7\textwidth]{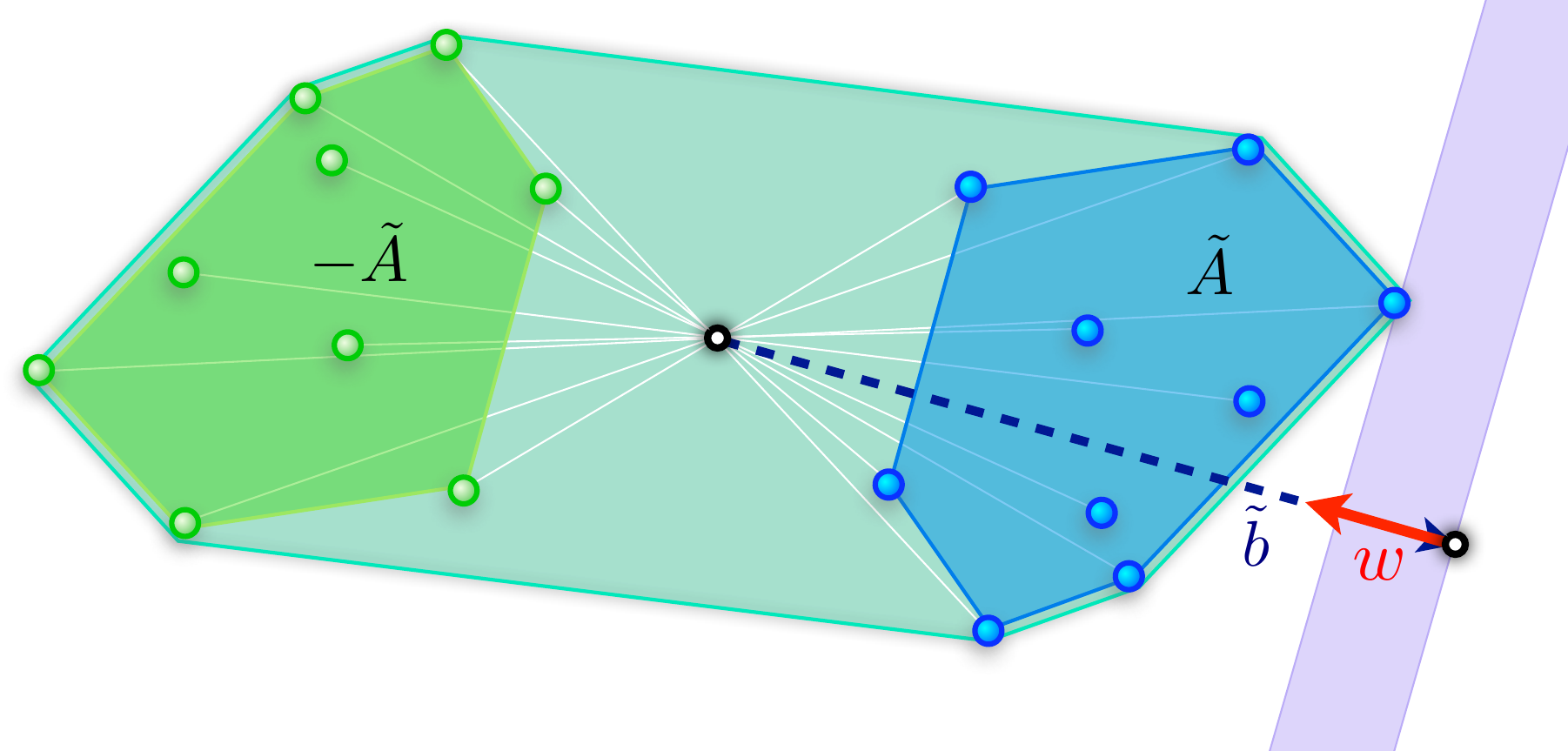}
}
\caption[Separation for the Lasso Instance]{Illustration of Lemma \ref{lem:tecInnerPositive}.
Recall that the translated points are defined by $\tilde A := \SetOf{A_i + \tilde b}{i\in[1..n]}$, where the translation is $\tilde b:= - \frac{w}{\norm{w}}\cdot\frac{D^2}{\sigma}$.}
\label{fig:lassoSep}
\end{figure}

\begin{lemma}\label{lem:tecInnerPositive}
Let $w$ be some $\sigma$-weakly-separating\index{weakly-separating}\index{separation} vector for the SVM for $\sigma>0$.
Then we claim that the translation by the vector $\tilde b:= - \frac{w}{\norm{w}}\cdot\frac{D^2}{\sigma}$ has the following properties.
For any pair of vectors $x,\delta\in\FilledSimplex,\delta\ne\zero$, we have that $(\tilde Ax - \tilde b)^T (-\tilde A\delta) > 0$.
\end{lemma}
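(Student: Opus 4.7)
My plan is to compute $(\tilde Ax-\tilde b)^T(-\tilde A\delta)$ directly, bound its four constituent terms using the weak-separation hypothesis $A_i^T\hat w\ge\sigma$ (where $\hat w:=w/\norm{w}_2$) together with the length bound $\norm{A_i}_2<D$, and then observe that the specific magnitude $\norm{\tilde b}_2=D^2/\sigma$ of the translation is exactly what makes the $s_x$-linear contributions cancel. Setting $c:=D^2/\sigma$ so that $\tilde b=-c\hat w$, and writing $s_y:=\one^Ty$ for $y\in\FilledSimplex$, the identity $\tilde Ay=Ay+s_y\tilde b$ already used in~(\ref{eq:lassoObj}) gives
\[
\tilde Ax-\tilde b=Ax+(1-s_x)c\hat w,\qquad -\tilde A\delta=-A\delta+s_\delta c\hat w,
\]
where $s_x\in[0,1]$ and $s_\delta\in(0,1]$ (strictly positive because $\delta\ge\zero$ and $\delta\neq\zero$).

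Expanding this product into four pieces, I would bound them as follows: $(Ax)^T(A\delta)\le\norm{Ax}_2\norm{A\delta}_2\le D^2s_xs_\delta$ by Cauchy--Schwarz and the triangle inequality (using $x,\delta\ge\zero$ and $\norm{A_i}_2<D$); $(Ax)^T\hat w\ge\sigma s_x$ directly from weak separation with $x\ge\zero$; and $(A\delta)^T\hat w\le Ds_\delta$ from $\hat w^TA_i\le\norm{A_i}_2<D$ with $\delta\ge\zero$. Substituting these into the expansion and invoking the crucial identity $c\sigma=D^2$, all $s_x$-linear contributions collapse, yielding
\[
(\tilde Ax-\tilde b)^T(-\tilde A\delta)\;\ge\;(1-s_x)\,s_\delta\,c(c-D)\;=\;(1-s_x)\,s_\delta\,\frac{D^3(D-\sigma)}{\sigma^2}\;\ge\;0,
\]
where $D>\sigma$ follows from $\sigma\le A_i^T\hat w\le\norm{A_i}_2<D$. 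This already delivers strict positivity whenever $s_x<1$.

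The main obstacle I anticipate is the boundary case $s_x=1$, where the combined lower bound above degenerates to $0$, so strict positivity must be squeezed out elsewhere in the chain. There, the two terms carrying a $(1-s_x)$ factor vanish and one is left with $s_\delta c(Ax)^T\hat w-(Ax)^T(A\delta)$; the first piece is $\ge s_\delta c\sigma=s_\delta D^2$ by weak separation, while $\norm{A_i}_2<D$ together with $x\ge\zero$ and $\one^Tx=1$ forces the triangle inequality $\norm{Ax}_2\le\sum_ix_i\norm{A_i}_2$ to be strict (some $x_i>0$), hence $\norm{Ax}_2<D$ and $(Ax)^T(A\delta)<D^2s_\delta$ strictly; the difference is therefore strictly positive. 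Apart from this boundary, the only care required is bookkeeping which links of the inequality chain are strict and which are non-strict. The algebra is otherwise clean, and the identity $c\sigma=D^2$ which drives the whole cancellation is precisely the reason Section~\ref{sec:SVMLeLasso} chose the translation to have length $D^2/\sigma$.
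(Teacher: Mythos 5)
Your proof is correct, but it takes a genuinely different route from the paper's. The paper argues geometrically via Lemma~\ref{lem:sepCone}: the ball of radius $D$ around $-\tilde b$ (and hence the whole set $-\tilde A\FilledSimplex$, in particular $v:=-\tilde A\delta$) sits inside a cone about $w$ of half-angle $\arcsin(\sigma/D)$, every vector of which is still weakly-separating for $A$; thus $v$ separates $\tilde A\Simplex$ from $\tilde b$, and the extension from $\Simplex$ to $\FilledSimplex$ follows by taking the convex hull with the point $\zero$, since $-\tilde b$ also has positive inner product with $v$. You instead expand $(\tilde Ax-\tilde b)^T(-\tilde A\delta)$ into four terms via $\tilde Ay = Ay+s_y\tilde b$ and let the choice $c=D^2/\sigma$ cancel the $s_x$-linear contributions exactly, which is a clean, self-contained calculation that avoids the cone machinery altogether; it even yields the quantitative lower bound $(1-s_x)s_\delta D^3(D-\sigma)/\sigma^2$ off the boundary $s_x=1$, something the paper's qualitative separation argument does not provide. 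Your boundary case $s_x=1$ is handled correctly (the strictness coming from $\norm{A_i}_2<D$ and $s_\delta>0$), though the strict step there is really $\sum_i x_i\norm{A_i}_2<D\sum_i x_i$ rather than strictness of the triangle inequality itself -- the conclusion $\norm{Ax}_2<D$ is what matters and it holds. The paper's route buys geometric insight into why the mirrored polytope lands behind the margin and reuses the separation vocabulary of Definition~\ref{def:weaklySep}; yours buys an elementary verification that does not depend on the cone lemma (whose statement in the appendix contains sign and $\arcsin$-argument slips that your computation sidesteps).
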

\begin{proof}
(See also Figure \ref{fig:lassoSep}).
By definition of the translation $\tilde b$, we have that the entire Euclidean ball of radius $D$ around the point~$-\tilde b$ --- and therefore also the point set $-\tilde A\FilledSimplex$ and in particular $v:=-\tilde A\delta$ --- is contained in $\cone(w,\arcsin(\frac{\sigma}{D}))$. 
Therefore by Lemma~\ref{lem:sepCone} ii), $v$ is separating for~$A$, and by translation $v$ also separates $\tilde A$ from~$\tilde b$. This establishes the result $(\tilde Ax - \tilde b)^T v > 0$ for any $x\in \Simplex$.

To extend this to the case $x\in\FilledSimplex$, we observe that by definition of $\tilde b$, also the point $\zero-\tilde b$ has strictly positive inner product with $v$. Therefore the entire convex hull of $\tilde A\Simplex \,\cup\, \zero$ and thus the set $\tilde A\FilledSimplex$ has the desired property.
\end{proof}

\newpage
\section{Implications \& Remarks}

In the following, we will explain a handful of implications of the shown equivalence, by relating both algorithms as well as known theoretical results for the Lasso or the SVM also to the respective other method.

%\paragraph{Differences of the two methods.}
%In the SVM setting, one is usually satisfied with either a primal or a dual solution. Where in the Lasso setting of sparse recovery, $x$ has to be found explicitly.

%%
%\subsection{Implications for the Lasso}
%
%Using only the ``easy'' direction (Lasso $\preceq$ SVM) of our reduction in Section~\ref{sec:LassoLeSVM}, we obtain the following insights into the Lasso problem:

%
\subsection{Sublinear Time Algorithms for Lasso and SVMs}\index{sublinear time algorithms}
The recent breakthrough SVM algorithm of \cite{Clarkson:2010ue,Hazan:2011wq} in time $O(\varepsilon^{-2}(n+d)\log n)$ returns an $\varepsilon$-accurate solution to problem~(\ref{eq:classifier}). Here $\varepsilon$-accurate means $(\sigma^*-\varepsilon)$-weakly-separating. The running time of the algorithm is remarkable since for large data, it is significantly smaller than even the size of the input matrix, being $d\cdot n$. Therefore, the algorithm does not read the full input matrix~$\tilde A$.
More precisely, \cite[Corollary III.2]{Clarkson:2010ue} %Corollary 2.8 in the journal arXiv version
proves that the algorithm provides (with high probability) a solution $p^*\in\Simplex$ of additive error at most $\varepsilon$ to 
\[
\min_{p\in\Simplex} \ \max_{\substack{w\in\R^d,\\ \norm{w}\le1}} \  w^T\tilde Ap \ .
\]
This is a reformulation of $\min_{p\in\Simplex} p^T\tilde A^T\tilde Ap\,$, which is exactly our SVM problem~(\ref{eq:classifier}), since for given $p$, the inner maximum is attained when $w=\tilde Ap$. 
%???? who wants to solve the non-neg lasso anyway?
%Alternatively, the paper also gives a solution to the more general problem
%\[
%\min_{p\in\Simplex} \, \max_{w\in\R^d} \, (2w^TA + \hat b - \one^T \norm{w})p
%\]
%if we use their Section 3.3 (journal) applying their technique to quadratic programming over the simplex. Here we would have to use $\hat b = Ab$ if that quantity is known before the computation. If not, then we can use our simple trick of adding the translation $b$ to every vector $A_i$ uniformly. Therefore, their method gives a sublinear time algorithm also for the Lasso.
Therefore, using our simple trick from Section~\ref{sec:LassoLeSVM} of reducing any Lasso instance~(\ref{eq:lasso}) to an SVM~(\ref{eq:classifier}) (with its matrix $\tilde A$ having twice the number of columns as $A$), we directly obtain a sublinear time algorithm for the Lasso. Note that since the algorithm of \cite{Clarkson:2010ue,Hazan:2011wq} only accesses the matrix $\tilde A$ by simple entry-wise queries, it is not necessary to explicitly compute and store $\tilde A$ (which is a preprocessing that would need linear time and storage). Instead, every entry $\tilde A_{ij}$ that is queried by the algorithm can be provided on the fly, by returning the corresponding (signed) entry of the Lasso matrix~$A$, minus~$b_i$.

It will be interesting to compare this alternative algorithm to the recent more specialized sublinear time Lasso solvers in the line of work of \cite{CesaBianchi:2011wb,Hazan:2012wj}, which are only allowed to access a constant fraction of the entries (or features) of each row of $A$. If we use our proposed reduction here instead, the resulting algorithm from \cite{Clarkson:2010ue} has more freedom: it can (randomly) pick arbitrary entries of $A$, without necessarily accessing an equal number of entries from each row.

On the other hand, it is an open research question if a sublinear SVM algorithm exists which only accesses a constant small fraction of each datapoint, or of each feature of the input data.

\subsection{A Kernelized Lasso}\index{kernel trick}\index{kernelization}\index{kernelized Lasso}
Traditional kernel regression\index{kernel regression} techniques \cite{Smola:2004ba,Saunders:1998ws,Roth:2004gw} try to learn a real-valued function~$f$ from the space $\R^d$ of the datapoints, such that the resulting real value for each datapoint approximates some observed value. The regression model is chosen as a linear combination of the (kernel) inner products with few existing landmark datapoints (the support vectors).

Here, as we discuss a kernelization of the Lasso that is in complete analogy to the classical kernel trick for SVMs, our goal is different. We are not trying to approximate $n$ many individual real values (one for each datapoint, or row of $A$), but instead we are searching for a linear combination of our points in the kernel space, such that the resulting combination is \emph{close} to the lifted point~$b$, measured in the kernel space norm. Formally, suppose our kernel space~$\mathcal H$ is given by an inner product $\kappa(y,z) = \langle \Phi(y), \Phi(z) \rangle$ for some implicit mapping $\Phi: \R^d \rightarrow {\mathcal H}$. Then we define our kernelized variant of the Lasso as
\begin{equation}\label{eq:kernelLasso}
  \min_{x\in\LOneBall} \, \norm{\sum_{i} \Phi(A_i) x_i-\Phi(b)}_{\mathcal H}^2 \ .
\end{equation}
Nicely, analogous to the SVM case, also this objective function here is determined purely in terms of the pairwise (kernel) inner products $\kappa(\cdot,\cdot)$.

An alternative way to see this is to observe that our simple ``mirror-and-translate'' trick from Section~\ref{sec:LassoLeSVM} also works the very same way in any kernel space $\mathcal H$. Here, the equivalent SVM instance is given by the $2n$ new points $\SetOf{\pm\Phi(A_i)-\Phi(b)}{i\in[1..n]}\subset{\mathcal H}$.
The crucial observation is that the (kernel) inner product of any two such points is
\[
\begin{array}{rl}
&\left\langle s_i\Phi(A_i)-\Phi(b), s_j\Phi(A_j)-\Phi(b) \right\rangle \\
=& s_is_j \kappa(A_i,A_j) - s_i\kappa(A_i,b) - s_j\kappa(A_j,b) + \kappa(b,b) \ .
\end{array}
\]
Here $s_i,s_j \in\pm1$ are the signs corresponding to each point. Therefore we have completely determined the resulting $2n \times 2n$ kernel matrix $K$ that defines the kernelized SVM~(\ref{eq:classifier}), namely $\min_{x\in\Simplex}x^TKx$, which solves our equivalent Lasso problem~(\ref{eq:kernelLasso}) in the kernel space~$\mathcal H$. %The above equation also ensures us that the resulting matrix $K$ is PSD.

\textit{Discussion.}
While traditional kernel regression corresponds to a lifting\index{lifting} of the \emph{rows} of the Lasso matrix $A$ into the kernel space, our approach~(\ref{eq:kernelLasso}) by contrast is lifting the \emph{columns} of $A$ (and the r.h.s. $b$). We note that it seems indeed counter-intuitive to make the regression ``more difficult'' by artificially increasing the dimension of $b$. Using e.g. a polynomial kernel, this means that we also want the higher moments of $b$ to be well approximated by our estimated~$x$. On the other hand, increasing the dimension of $b$ naturally corresponds to adding more data rows (or measurements) to a classical Lasso instance~(\ref{eq:lasso}). %TODO: say that \cite{Roth:2004gw} in comparison does neither of the two things?

In the light of the success of the kernel idea for the classification case with its existing well-developed theory, we think it will be interesting to relate these results to the above proposed kernelized version of the Lasso, and to study how different kernels will affect the solution $x$ for applications of the Lasso.
Using a different connection to SVMs, the early work of \cite{Girosi:1998dd} has studied a similar kernelization of the penalized version of the Lasso, see also \cite{Evgeniou:2000fx}. For applications in image retrieval, \cite{Thiagarajan:2012tb} has recently applied a similar Lasso kernelization idea.

%%
%\subsection{Implications for SVMs}
%
%The more complicated direction of our reduction (SVM $\preceq$ Lasso) from Section~\ref{sec:SVMLeLasso} also gives a promising approach towards some new insights into SVMs:
%\\

%
\subsection{The Pattern of Support Vectors, in the View of Lasso Sparsity}\index{support vectors}\index{sparsity}
%\subsection{The Pattern of Support Vectors, and Lasso Sparsity}
Using our construction of the equivalent Lasso instance for a given SVM, we can translate  sparsity results for the Lasso to understand the pattern of support vectors\index{pattern of support vectors} of SVMs.

The motivation here is that a small number of support vectors is crucial for the efficient application of SVMs in practice, in particular in the kernelized case, because the cost to evaluate the resulting classifier is directly proportional to the number of support vectors. Furthermore, the support vectors are the most informative points for the classification task, while the non-support vectors could be safely discarded from the problem.
%
%In the setting known as \emph{sparse recovery}, the goal is to approximately recover a sparse solution $x$ using instances $A,b$ (consisting of only a small number of rows), where $b$ is interpreted as a noisy or corrupted measurement of $Ax$.  \cite{Chen:1998hm,Porat:2012ud}.

\paragraph{Using Sparse Recovery Results.}
There has been a vast amount of literature studying the sparsity of solutions to the Lasso and related $\ell_1$-regularized methods, in particular the study of the sparsity of $x$ when $A$ and $b$ are from distributions with certain properties. For example, in the setting known as \emph{sparse recovery}\index{sparse recovery}, the goal is to approximately recover a sparse solution $x$ using a Lasso instance $A,b$ (consisting of only a small number of rows). Here $b$ is interpreted as a noisy or corrupted linear measurement $A\hat x$, and the unknown original $\hat x$ is sparse. Classical recovery results then show that under weak assumptions on $A$, $b$ and the sparsity of $\hat x$, the optimal Lasso solution $x$ must be identical to the (unknown) sparse~$\hat x$, see e.g.~\cite{Chen:1998hm,Porat:2012ud}. 

Now our construction of the equivalent Lasso instance for a given SVM allows us to translate such sparsity results to the pattern of SVM support vectors.
More precisely, any result that characterizes the Lasso sparsity\index{Lasso sparsity} for some distribution of matrices $A$ and suitable $b$, will also characterize the patterns of support vectors for the equivalent SVM instance (and in particular the number of support vectors).
This assumes that a Lasso sparsity result is applicable for the type of translation~$b$ that we have used to construct the equivalent Lasso instance. However, this is not hopeless. For example, existence of a weakly separating vector that is a sparse convex combination of the SVM datapoints is sufficient, since this results in a translation~$b$ that satisfies $b \propto A\hat x$ for a sparse weight vector $\hat x$.
It remains to investigate which distributions and corresponding sparsity results are of most practical interest from the SVM perspective, in order to guarantee a small number of support vectors.
%This seems particularly attractive since it would only use the existence result of the equivalent translation, and would not necessarily require us to compute a separating vector.
%
%\todo{what is a concrete example of such a thing? are there practical distributions of which we can cite a specific example result?}

\paragraph{Lasso Sparsity in the View of SVMs.}
In the very other direction, sparsity has also been studied for SVMs in the literature, for example in the work of \cite{Steinwart:2003wk}, which analyzes the asymptotic regime $n \rightarrow \infty$. Using the simpler one of our reductions, the same results also hold for the Lasso, when the number of variables $n$ grows.

\subsection{Screening Rules for Support Vector Machines}\index{screening}\index{screening rules}
For the Lasso, \emph{screening rules} have been developed recently. This approach consists of a pre-processing of the data $A$, in order to immediately discard those predictors\index{discard predictors} $A_i$ that can be guaranteed to be inactive for the optimal solution.
Provable guarantees for such rules were first obtained by~\cite{Ghaoui:2010tu}, and studied also in the later work \cite{Wang:2013uq}, or the heuristic paper~\cite{Tibshirani:2011iz}.

Translated to the SVM setting by our reduction, any such existing Lasso screening rule can be used to permanently discard input points before the SVM optimization is started. The screening rule then guarantees that any discarded point will not be a support vector, so the resulting optimal classifier remains unchanged. 
We are not aware of screening rules in the SVM literature so far, with the exception of the more recent paper of \cite{Ogawa:2013ul}.
\\

While the two previous subsections have mainly made use of the more complicated direction of our reduction (SVM $\preceq$ Lasso) from Section~\ref{sec:SVMLeLasso}, we can also gain some insights into the pattern of support vectors of SVMs by using the other (simpler) direction of reduction, as we will do next.

\subsection{Regularization Paths and Homotopy Methods}\label{sec:homotopy}\index{regularization path}\index{solution path methods}\index{homotopy methods}
For most machine learning methods --- including SVMs and Lasso --- one of the main hurdles in practice is the selection of the right free parameters. For SVMs and Lasso, the main question boils down on to how to select the best value for the regularization parameter\index{regularization parameter}, which determines the trade-off between the best fit of the model, and the model complexity.
For the SVM, this is the soft-margin parameter $C$, while in the Lasso, the regularization parameter is the value $r$ for the required bound $\norm{x}_1 \leq r$.

Since naive grid-search\index{grid-search} for the best parameter is error-prone and comes without guarantees between the grid-values, algorithms that follow the solution path --- as the regularization parameter changes --- have been developed for both SVMs \cite{Hastie:2004uj} and Lasso \cite{Osborne:2000kg,Efron:2004tz}, and have become popular in particular on the Lasso side.

In the light of our joint investigation of Lasso and SVMs here, we can gain the following insights on path methods for the two problems:

\paragraph{General Solution Path Algorithms.}
We have observed that both the primal Lasso (\ref{eq:lasso}) and the dual $\ell_2$-SVM (\ref{eq:classifier}) are in fact convex optimization problems over the simplex.
This enables us to apply the same solution path methods to both problems.
More precisely, for problems of the form $\min_{x\in\Simplex} f(x,t)$, general path following methods are available, that can maintain an approximation guarantee\index{approximate solution} along the entire path in the parameter $t$, as shown in \cite{Giesen:2012ik} and more recently strengthened by \cite{Giesen:2012uk}. These methods do apply for objective functions $f$ that are convex in $x$ and continuous in the parameter~$t$ (which in our case is $r$ for the Lasso and $C$ for the SVM).
%and slightly more detailed in \cite[Section 6]{Jaggi:2011ux}
%\cite{Giesen:2012uk} is the new 1/sqrt NIPS paper

\paragraph{Path Complexity.}\index{path complexity}
The exact solution path for the Lasso is known to be piecewise linear. 
%The same also holds for the $\ell_2$-SVM, which is less widely known. %TODO: check it in details
However, the number of pieces, i.e. the complexity of the path, is not easy to determine.
The recent work of \cite{Mairal:2012wu} has constructed a Lasso instance $A\in\R^{d\times n}$, $b\in\R^n$, such that the complexity of the solution path (as the parameter~$r$ increases) is exponential in $n$. This is inspired by a similar result by \cite{Gartner:2012wi} which holds for the $\ell_1$-SVM.

Making use of the easier one of the reductions we have shown above, we ask if similar complexity worst-case results could also be obtained for the $\ell_2$-SVM.
For every constraint value $r>0$ for the Lasso problem, we have seen that the corresponding equivalent SVM instance (\ref{eq:classifier}) as constructed in Section~\ref{sec:LassoLeSVM} is $\min_{x\in\Simplex} \, \big\|\tilde A_{(r)}x\big\|^2$, with\vspace{-1mm}
\begin{equation}\label{eq:SVMmovingInstance}
\tilde A_{(r)} := r(A \,| {-A}) - b\one^T \in\R^{d\times 2n} \ .
\end{equation}

Therefore, we have obtained a hard-margin SVM, with the datapoints moving in the space~$\R^d$ as the Lasso regularization parameter $r$ changes. The movement is a simple linear rescaling by~$r$, relative to the reference point $b\in\R^d$.
The result of \cite{Mairal:2012wu} shows that essentially all sparsity patterns do occur in the Lasso solution as $r$ changes%
\footnote{
The result of \cite{Mairal:2012wu} applies to the penalized formulation of the Lasso, that is the solution path of $\min_{x\in\R^n} \norm{Ax-b}^2 + \lambda \norm{x}_1$ as the parameter $\lambda \in\R_+$ varies.
However, here we are interested in the constrained Lasso formulation, that is $\min_{x,\, \norm{x}_1\le r} \norm{Ax-b}^2$ where the regularization parameter is $r \in\R_+$.

Luckily, the penalized\index{penalized optimization} and the constrained\index{constrained optimization} problem formulations are known to be equivalent by standard convex optimization theory, and the have the same regularization paths, in the following sense.
For every choice of $\lambda \in\R_+$, there is a value for the constraint parameter $r$ such that the solutions to the two problems are identical (choose for example $r_{(\lambda)} := \big\|x^*_{(\lambda)} \big\|_1$ for some optimal $x^*_{(\lambda)}$, then the same vector is obviously also optimal for the constrained problem). This mapping from $\lambda$ to $r_{(\lambda)}$ is monotone. As the regularization parameter $\lambda$ is weakened (i.e. decreased), the corresponding constraint value $r_{(\lambda)}$ only grows larger.

The mapping in the other direction is similar: for every $r$, we know there is a value $\lambda_{(r)}$ (in fact this is the Lagrange multiplier of the constraint $\norm{x}_1\le r$), such that the penalized formulation has the same solution, and the mapping from $r$ to $\lambda_{(r)}$ is monotone as well.

Having both connections, it is clear that the two kinds of regularization paths must be the same, and that the support patterns that occur in the solutions --- as we go along the path --- are also appearing in the very same order as we track the respective other path.
}%
, i.e. that the number of patterns is exponential in $n$. %TODO: say that he does penalized formulation, where we do constrained
For each pattern, we know that the SVM solution $x_\lOne = (\id_n \,| {-\id_n}) x_\simpl$ is identical to the Lasso solution, and in particular also has the same sparsity pattern. Therefore, we also have the same (exponential) number of different sparsity patterns in the simplex parameterization~$x_\simpl$ for the SVM (one could even choose more in those cases where the mapping is not unique).

To summarize, we have shown that a simple rescaling of the SVM data can have a very drastic effect, in that \emph{every} pattern of support vectors can potentially occur as this scaling changes. While our construction is still a worst-case result, note that the operation of rescaling is not unrealistic in practice, as it is similar to popular data preprocessing by re-normalizing the data, e.g. for zero mean and variance one. %but it's still a very special Lasso data instance that we have started from
\\

However, note that the constructed instance (\ref{eq:SVMmovingInstance}) is a hard-margin SVM, with the datapoints moving as the parameter $r$ changes. It does not directly correspond to a soft-margin SVM, because the movement of points is different from changing the regularization parameter $C$ in an $\ell_2$-SVM.
As we have seen in Lemma~\ref{lem:softSVM}, changing $C$ in the SVM formulation~(\ref{eq:softSVMprimal}) has the effect that the datapoints~$\tilde A_{(C)}$ in the dual problem \ 
$
  \min_{x\in\Simplex} \, \big\| \tilde A_{(C)} x \big\|^2
$
move as follows (after re-scaling the entire problem by the constant factor $C$):
\[
\tilde A_{(C)} := {\scalebox{0.9}{$\sqrt{C}$}X\choose\id_n} \in \R^{(d+n)\times n} \vspace{-1pt}
\]

In conclusion, the reduction technique here does unfortunately not yet directly translate the regularization path of the Lasso to a regularization path for an $\ell_2$-SVM.
Still, we have gained some more insight as to how ``badly'' the set of SVM support vectors can change when the SVM data is simply re-scaled.
We hope that the correspondence will be a first step to better relate the two kinds of regularization paths in the future. Similar methods could also extend to the case when other types of parameters are varied, such as e.g. a kernel parameter.

\clearpage
\section{Conclusions}
We have investigated the relation between the Lasso and SVMs, and constructed equivalent instances of the respective other problem. While obtaining an equivalent SVM instance for a given Lasso is straightforward, the other direction is slightly more involved in terms of proof, but still simple to implement, in particular e.g. for $\ell_2$-loss SVMs. %, and additionally requires the knowledge of a weak solution to the SVM instance.

The two reductions allow us to better relate and compare many existing algorithms for both problems. Also, it can be used to translate a lot of the known theory for each method also to the respective other method.
In the future, we hope that the understanding of both types of methods can be further deepened by using this correspondence.

%\clearpage
%%%%%%%%%%%%%%

%%\setlength{\bibsep}{2pt}
%\bibliographystyle{icml2013}
%\begin{small}
%\bibliography{/Users/jaggim/bibliography}
%\end{small}

%\clearpage
%%%%%%%%%%%%%%
\appendix
%\chapter{Some Soft-Margin SVM Variants that are Equivalent to~(\ref{eq:classifier})} %TODO: way too big in terms of appearance, and we don't want a separate chapter here i guess. also we don't want a new table of contents here
\section{Some Soft-Margin SVM Variants that are Equivalent to~(\ref{eq:classifier})}
\label{sec:SVMderivation}

We include the derivation of the dual formulation to the $\ell_2$-loss soft-margin SVM (\ref{eq:softSVMprimal}) for $n$ datapoints $X_i\in\R^d$, together with their binary class labels $y_i\in \{\pm1\}$, for $i\in[1..n]$, as defined above in Section~\ref{sec:softSVM}.
The equivalence to~(\ref{eq:classifier}) directly extends to the one- and two-class case, without or with (regularized) offset term, and as well for the hard-margin SVM.
These equivalent formulations have been known in the SVM literature, see e.g. 
\cite{Scholkopf:2002ta,Keerthi:2000tj,Tsang:2005up,GartnerJaggi:2009}, 
and the references therein.

%TODO: try to adjust the numbering of the Lemma so that it says Lemma 2 as in the main part (that is ref\{lem:softSVM})
\begin{replemma}{lem:softSVM}
%\begin{lemma}
The dual of the soft-margin SVM (\ref{eq:softSVMprimal}) is an instance of the classifier formulation~(\ref{eq:classifier}), that is \ 
$
  \min_{x\in\Simplex} \, \norm{Ax}^2
$
, with
\[
A := {Z\choose\frac1{\sqrt{C}}\id_n} \in \R^{(d+n)\times n} \vspace{-1pt}
\]
where the data matrix $Z\in\R^{d\times n}$ consists of the $n$ columns $Z_i := y_iX_i$.
\end{replemma}
%\end{lemma}
\begin{proof}
The Lagrangian \cite[Section 5]{Boyd:2004uz}\index{Lagrange duality} of the soft-margin SVM formulation (\ref{eq:softSVMprimal}) with its $n$ constraints can be written as 
\[
\begin{array}{rl}
L(w,\rho,\xi,\alpha) := & \frac12 \norm{w}^2 - \rho + \frac{C}{2}\sum_i \xi_i^2\\
   &+\sum_i \alpha_i \left( - w^T Z_i + \rho - \xi_i \right) \ .
\end{array}
\]
Here we introduced a non-negative Lagrange multiplier $\alpha_i\ge0$ for each of the $n$ constraints.
Differentiating $L$ with respect to the primal variables, we obtain the KKT optimality conditions\vspace{-1mm}
\[
\begin{array}{rcl}
   \zero \ {\buildrel ! \over =}& \frac{\partial}{\partial w} &=\ w - \sum_i \alpha_i Z_i \\
   0 \ {\buildrel ! \over =}& \frac{\partial}{\partial \rho} &=\ 1 - \sum_i \alpha_i \\
   \zero \ {\buildrel ! \over =}& \frac{\partial}{\partial \xi} &=\ C\xi - \alpha  \ .
\end{array}
\]
When plugged into the Lagrange dual problem $\max_{\alpha} \min_{w,\rho,\xi} L(w,\rho,\xi,\alpha)\,$, these give us the equivalent formulation (sometimes called the \emph{Wolfe-dual})
\[
\begin{array}{rl}
\max_{\alpha} & \frac12 \alpha^TZ^TZ\alpha - \rho + \frac{C}2\frac1{C^2}\alpha^T\alpha \\
 &~~~   -\alpha^TZ^TZ\alpha + \rho - \frac1C\alpha^T\alpha\\
 &= -\frac12\alpha^TZ^TZ\alpha - \frac1{2C}\alpha^T\alpha \ .
\end{array}
\]
In other words the dual\index{SVM dual optimization problem} is
\[
\begin{array}{rl}
\min_{\alpha} & \alpha^T\big(Z^TZ+ \frac1{C} \id_n\big)\alpha\\
 s.t.& \alpha \ge 0\\
     & \alpha^T\one = 1 \ .
\end{array}
\]
This is directly an instance of our first SVM formulation~(\ref{eq:classifier}) used in the introduction, if we use the extended matrix \vspace{-1mm}
\[
A := {Z\choose\frac1{\sqrt{C}}\id_n} \in \R^{(d+n)\times n} \ .  \vspace{-1em}
\]
\end{proof}

Note that the (unique) optimal primal solution $w$ can directly be obtained from any dual optimal $\alpha$ by using the optimality condition $w = A\alpha$.

%
%\section{Relation of the Lasso and the Dantzig Selector}
%The Dantzig selector~\cite{Candes:2007bf,Candes:2007jr} is given by $\min_{x} \norm{x}_1$ s.t. $\norm{A^T(Ax-b)}_{\infty}\le \beta$. If we write this in the related form $\min_{x} \norm{A^T(Ax-b)}_{\infty}$ s.t. $\norm{x}_1\le c$, then this is exactly the first term of the objective of the poor-man's dual problem $\displaystyle\max_{\norm{x}_1\le 1} \norm{\nabla f(x)}_{\infty} + \langle x, \nabla f(x)\rangle$ \cite{Jaggi:2011us}. This holds since the gradient of the standard squared loss is $\nabla f(x) = A^T(Ax-b)$. Not clear to me how to get the second term.

\newpage
%%%%%%%%%%%%%%%%%%%%%%%%%%%%%%%%%%%%%
\bibliographystyle{alphaurl}
{\small
%\bibliography{bibliography-mj}
\bibliography{/Users/jaggim/bibliography}

\newcommand{\etalchar}[1]{$^{#1}$}
\begin{thebibliography}{WLG{\etalchar{+}}13}

\bibitem[BS83]{Bloomfield:1983tn}
Peter Bloomfield and William~L Steiger.
\newblock \href{http://books.google.ch/books?id=B1Q_AQAAIAAJ}{\em {Least absolute deviations: theory, applications, and
  algorithms}}.
\newblock Progress in probability and statistics. Birkh{\"a}user, 1983.

\bibitem[Bur98]{Burges:1998hg}
Christopher J~C Burges.
\newblock \href
  {http://dx.doi.org/10.1023/A:1009715923555}{A Tutorial on Support Vector Machines for Pattern Recognition}.
\newblock {\em Data Mining and Knowledge Discovery}, 2(2):121--167, 1998.

\bibitem[BV04]{Boyd:2004uz}
Stephen~P Boyd and Lieven Vandenberghe.
\newblock \href{http://www.stanford.edu/~boyd/cvxbook/}{\em {Convex optimization}}.
\newblock Cambridge University Press, 2004.

\bibitem[BvdG11]{Buhlmann:2011jp}
Peter B{\"u}hlmann and Sara van~de Geer.
\newblock \href{http://dx.doi.org/10.1007/978-3-642-20192-9}{\em {Statistics for High-Dimensional Data - Methods, Theory and
  Applications}}.
\newblock Springer Series in Statistics 0172-7397. Springer, 2011.

\bibitem[CBSSS11]{CesaBianchi:2011wb}
Nicol{\`o} Cesa-Bianchi, Shai Shalev-Shwartz, and Ohad Shamir.
\newblock \href{http://jmlr.org/papers/v12/cesa-bianchi11a.html}{Efficient Learning with Partially Observed Attributes}.
\newblock {\em The Journal of Machine Learning Research}, 12:2857--2878,
  2011.

\bibitem[CDS98]{Chen:1998hm}
Scott~Shaobing Chen, David~L Donoho, and Michael~A Saunders.
\newblock \href{http://dx.doi.org/10.1137/S1064827596304010}{Atomic Decomposition by Basis Pursuit}.
\newblock {\em SIAM Journal on Scientific Computing}, 20(1):33, 1998.

\bibitem[CHL08]{Chang:2008up}
Kai-Wei Chang, Cho-Jui Hsieh, and Chih-Jen Lin.
\newblock \href{http://jmlr.org/papers/v9/chang08a.html}{Coordinate Descent Method for Large-scale L2-loss Linear Support
  Vector Machines}.
\newblock {\em JMLR}, 9:1369--1398, 2008.

\bibitem[CHW10]{Clarkson:2010ue}
Kenneth~L Clarkson, Elad Hazan, and David~P Woodruff.
\newblock \href{http://dx.doi.org/10.1109/FOCS.2010.50}{Sublinear Optimization for Machine Learning}.
\newblock {\em FOCS 2010 - 51st Annual IEEE Symposium on Foundations of
  Computer Science}, 2010.

\bibitem[CV95]{cortes95softmargin}
Corinna Cortes and Vladimir Vapnik.
\newblock \href{http://dx.doi.org/10.1007/BF00994018}{Support-Vector Networks}.
\newblock {\em Machine Learning}, 20(3):273--297, 1995.

\bibitem[EHJT04]{Efron:2004tz}
Bradley Efron, Trevor Hastie, Iain Johnstone, and Robert Tibshirani.
\newblock \href{http://www.jstor.org/stable/3448465}{Least angle regression}.
\newblock {\em Annals of Statistics}, 32(2):407--499, 2004.

\bibitem[EPP00]{Evgeniou:2000fx}
Theodoros Evgeniou, Massimiliano Pontil, and Tomaso Poggio.
\newblock \href{http://dx.doi.org/10.1023/A:1018946025316}{Regularization Networks and Support Vector Machines}.
\newblock {\em Advances in Computational Mathematics}, 13(1):1--50, 2000.

\bibitem[GC05]{Ghosh:2005ft}
Debashis Ghosh and Arul~M Chinnaiyan.
\newblock \href{http://dx.doi.org/10.1155/JBB.2005.147}{Classification and Selection of Biomarkers in Genomic Data Using
  LASSO}.
\newblock {\em Journal of Biomedicine and Biotechnology}, 2005(2):147--154,
  2005.

\bibitem[Gir98]{Girosi:1998dd}
Federico Girosi.
\newblock \href{http://dx.doi.org/10.1162/089976698300017269}{An Equivalence Between Sparse Approximation and Support Vector
  Machines}.
\newblock {\em Neural Computation}, 10(6):1455--1480, 1998.

\bibitem[GJ09]{GartnerJaggi:2009}
Bernd G{\"a}rtner and Martin Jaggi.
\newblock \href{http://dx.doi.org/10.1145/1542362.1542370}{Coresets for polytope distance}.
\newblock {\em SCG '09: Proceedings of the 25th annual Symposium on
  Computational Geometry}, 2009.

\bibitem[GJL12]{Giesen:2012ik}
Joachim Giesen, Martin Jaggi, and S{\"o}ren Laue.
\newblock \href{http://dx.doi.org/10.1145/2390176.2390186}{Approximating parameterized convex optimization problems}.
\newblock {\em ACM Transactions on Algorithms}, 9(10):1--17, 2012.

\bibitem[GJM12]{Gartner:2012wi}
Bernd G{\"a}rtner, Martin Jaggi, and Cl{\'e}ment Maria.
\newblock \href{http://jocg.org/index.php/jocg/article/view/88/34}{An Exponential Lower Bound On The Complexity Of Regularization
  Paths}.
\newblock {\em Journal of Computational Geometry}, 3(1):168--195, 2012.

\bibitem[GMLS12]{Giesen:2012uk}
Joachim Giesen, Jens M{\"u}ller, Soeren Laue, and Sascha Swiercy.
\newblock \href{http://books.nips.cc/papers/files/nips25/NIPS2012_1034.pdf}{Approximating Concavely Parameterized Optimization Problems}.
\newblock In {\em NIPS}, 2012.

\bibitem[GVR10]{Ghaoui:2010tu}
Laurent~El Ghaoui, Vivian Viallon, and Tarek Rabbani.
\newblock \href{http://arxiv.org/abs/1009.4219v2}{Safe Feature Elimination for the LASSO and Sparse Supervised
  Learning Problems}.
\newblock {\em arXiv.org}, 2010.

\bibitem[Hea82]{Hearn:1982ee}
Donald~W Hearn.
\newblock \href{http://dx.doi.org/10.1016/0167-6377(82)90049-9}{The gap function of a convex program}.
\newblock {\em Operations Research Letters}, 1(2):67--71, 1982.

\bibitem[HK12]{Hazan:2012wj}
Elad Hazan and Tomer Koren.
\newblock \href{http://arxiv.org/abs/1108.4559}{Linear Regression with Limited Observation}.
\newblock In {\em ICML 2012 - Proceedings of the 29th International Conference
  on Machine Learning}, 2012.

\bibitem[HKS11]{Hazan:2011wq}
Elad Hazan, Tomer Koren, and Nathan Srebro.
\newblock \href{http://ie.technion.ac.il/~ehazan/papers/HaKoSr11full.pdf}{Beating SGD: Learning SVMs in Sublinear Time}.
\newblock In {\em NIPS}, 2011.

\bibitem[HO04]{Hochreiter:2004wv}
Sepp Hochreiter and Klaus Obermayer.
\newblock \href{http://books.google.com/books?hl=en&lr=&id=SwAooknaMXgC&oi=fnd&pg=PA319&dq=Gene+Selection+for+Microarray+Data+Hochreiter&ots=rJz9BqRdAi&sig=f7L66mKzJBfQKB_AfbNDggXfa7c}
{Gene Selection for Microarray Data}.
\newblock In Bernhard Sch{\"o}lkopf, Jean-Philippe Vert, and Koji Tsuda,
  editors, {\em Kernel Methods in Computational Biology}, page 319. MIT Press,
  2004.
  
\bibitem[HO06]{Hochreiter:2006jc}
Sepp Hochreiter and Klaus Obermayer.
\newblock \href{http://dx.doi.org/10.1162/neco.2006.18.6.1472}{Support Vector Machines for Dyadic Data}.
\newblock {\em Neural Computation}, 18(6):1472--1510, 2006.

\bibitem[HRTZ04]{Hastie:2004uj}
Trevor Hastie, Saharon Rosset, Robert Tibshirani, and Ji~Zhu.
\newblock \href{http://jmlr.org/papers/v5/hastie04a.html}{The Entire Regularization Path for the Support Vector Machine}.
\newblock {\em The Journal of Machine Learning Research}, 5:1391--1415,
  2004.

\bibitem[Jag13a]{Jaggi:2013vta}
Martin Jaggi.
\newblock \href{http://arxiv.org/pdf/1303.1152v1.pdf}{An Equivalence between the Lasso and Support Vector Machines}.
\newblock In {\em ROKS 2013: International Workshop on Advances in
  Regularization, Optimization, Kernel Methods and Support Vector Machines:
  Theory and Applications}, 2013.

\bibitem[Jag13b]{Jaggi:2013wg}
Martin Jaggi.
\newblock \href{http://jmlr.org/proceedings/papers/v28/jaggi13}{Revisiting Frank-Wolfe: Projection-Free Sparse Convex Optimization}.
\newblock In {\em ICML 2013 - Proceedings of the 30th International Conference
  on Machine Learning}, 2013.

\bibitem[KSBM00]{Keerthi:2000tj}
S~Sathiya Keerthi, Shirish~K Shevade, Chiranjib Bhattacharyya, and K~R~K
  Murthy.
\newblock \href{http://dx.doi.org/10.1109/72.822516}{A fast iterative nearest point algorithm for support vector machine
  classifier design}.
\newblock {\em IEEE Transactions on Neural Networks}, 11(1):124--136, 2000.

\bibitem[LM01]{Lee:2001vf}
Yuh-Jye Lee and Olvi~L Mangasarian.
\newblock \href{http://dx.doi.org/10.1137/1.9781611972719.13}{RSVM: Reduced Support Vector Machines}.
\newblock In {\em SDM 2001 - Proceedings of the first SIAM International
  Conference on Data Mining}, 2001.

\bibitem[LYX05]{Li:2005wr}
Fan Li, Yiming Yang, and Eric~P Xing.
\newblock \href{http://books.nips.cc/papers/files/nips18/NIPS2005_0644.pdf}{From Lasso regression to Feature vector machine}.
\newblock In {\em NIPS}, 2005.

\bibitem[MY12]{Mairal:2012wu}
Julien Mairal and Bin Yu.
\newblock \href{http://arxiv.org/abs/1205.0079}{Complexity Analysis of the Lasso Regularization Path}.
\newblock In {\em ICML 2012 - Proceedings of the 29th International Conference
  on Machine Learning}, 2012.

\bibitem[OPT00]{Osborne:2000kg}
Michael~R Osborne, Brett Presnell, and Berwin~A Turlach.
\newblock \href{http://dx.doi.org/10.1093/imanum/20.3.389}{A new approach to variable selection in least squares problems}.
\newblock {\em IMA Journal of Numerical Analysis}, 20(3):389--403, 2000.

\bibitem[OST13]{Ogawa:2013ul}
Kohei Ogawa, Yoshiki Suzuki, and Ichiro Takeuchi.
\newblock \href{http://jmlr.org/proceedings/papers/v28/ogawa13b.html}{Safe Screening of Non-Support Vectors in Pathwise SVM Computation}.
\newblock In {\em ICML 2013 - Proceedings of the 30th International Conference
  on Machine Learning}, pages 1382--1390, 2013.

\bibitem[PRE98]{Pontil:1998vs}
Massimiliano Pontil, Ryan Rifkin, and Theodoros Evgeniou.
\newblock \href{http://hdl.handle.net/1721.1/7258}{From Regression to Classification in Support Vector Machines}.
\newblock Technical Report A.I. Memo No. 1649, MIT, 1998.

\bibitem[PS12]{Porat:2012ud}
Ely Porat and Martin~J Strauss.
\newblock \href{http://dx.doi.org/10.1137/1.9781611973099.96}{Sublinear time, measurement-optimal, sparse recovery for all}.
\newblock In {\em SODA '12: Proceedings of the Twenty-Third Annual ACM-SIAM
  Symposium on Discrete Algorithms}. SIAM, 2012.

\bibitem[Ros58]{Rosenblatt:1958}
Frank Rosenblatt.
\newblock \href{http://dx.doi.org/10.1037/h0042519}{The Perceptron: A Probabilistic Model for Information Storage and
  Organization in the Brain}.
\newblock {\em Psychological Review}, 65(6):386--408, 1958.

\bibitem[Rot04]{Roth:2004gw}
Volker Roth.
\newblock \href{http://dx.doi.org/10.1109/TNN.2003.809398}{The Generalized LASSO}.
\newblock {\em IEEE Transactions on Neural Networks}, 15(1):16--28,
  2004.

\bibitem[SGV98]{Saunders:1998ws}
Craig Saunders, Alexander Gammerman, and Volodya Vovk.
\newblock \href{http://portal.acm.org/citation.cfm?id=645527.657464&coll=DL&dl=GUIDE&CFID=67533571&CFTOKEN=24674007}
 {Ridge Regression Learning Algorithm in Dual Variables}.
\newblock In {\em ICML '98: Proceedings of the Fifteenth International
  Conference on Machine Learning}. 1998.
  
\bibitem[SS02]{Scholkopf:2002ta}
Bernhard Sch{\"o}lkopf and Alex~J Smola.
\newblock \href{http://books.google.com/books?id=y8ORL3DWt4sC&printsec=frontcover&dq=kernel+learning&cd=1&source=gbs_api}
  {\em {Learning with kernels}}.
\newblock support vector machines, regularization, optimization, and beyond.
  The MIT Press, 2002.
  
\bibitem[SS04]{Smola:2004ba}
Alex~J Smola and Bernhard Sch{\"o}lkopf.
\newblock \href{http://dx.doi.org/10.1023/B:STCO.0000035301.49549.88}{A tutorial on support vector regression}.
\newblock {\em Statistics and Computing}, 14:199--222, 2004.

\bibitem[Ste03]{Steinwart:2003wk}
Ingo Steinwart.
\newblock \href{https://papers.nips.cc/paper/2477-sparseness-of-support-vector-machines-some-asymptotically-sharp-bounds.pdf}
  {Sparseness of Support Vector Machines---Some Asymptotically Sharp
  Bounds}.
\newblock In {\em NIPS}, 2003.

\bibitem[TBF{\etalchar{+}}11]{Tibshirani:2011iz}
Robert Tibshirani, Jacob Bien, Jerome Friedman, Trevor Hastie, Noah Simon,
  Jonathan Taylor, and Ryan~J. Tibshirani.
\newblock \href {http://dx.doi.org/10.1111/j.1467-9868.2011.01004.x}{Strong rules for discarding predictors in lasso-type problems}.
\newblock {\em Journal of the Royal Statistical Society: Series B (Statistical
  Methodology)}, 74(2):245--266, 2011.

\bibitem[Tib96]{Tibshirani:1996wb}
Robert Tibshirani.
\newblock \href{http://www.jstor.org/stable/2346178}{Regression Shrinkage and Selection via the Lasso}.
\newblock {\em Journal of the Royal Statistical Society. Series B
  (Methodological)}, pages 267--288, 1996.

\bibitem[TKC05]{Tsang:2005up}
Ivor~W Tsang, James~T Kwok, and Pak-Ming Cheung.
\newblock \href{http://jmlr.org/papers/volume6/tsang05a/tsang05a.pdf}{Core Vector Machines: Fast SVM Training on Very Large Data Sets}.
\newblock {\em Journal of Machine Learning Research}, 6:363--392, 2005.

\bibitem[TRS12]{Thiagarajan:2012tb}
Jayaraman~J Thiagarajan, Karthikeyan~Natesan Ramamurthy, and Andreas Spanias.
\newblock \href{http://www.public.asu.edu/~knatesan/papers/Local_Sparse_Coding_Classification.pdf}
  {Local Sparse Coding for Image Classification and Retrieval}.
\newblock Technical report, ASU, 2012.

\bibitem[WLG{\etalchar{+}}13]{Wang:2013uq}
Jie Wang, Binbin Lin, Pinghua Gong, Peter Wonka, and Jieping Ye.
\newblock \href{http://arxiv.org/abs/1211.3966}{Lasso Screening Rules via Dual Polytope Projection}.
\newblock In {\em NIPS}, 2013.

\bibitem[Zie95]{Ziegler:1995td}
G{\"u}nter~M Ziegler.
\newblock \href{http://www.springer.com/mathematics/geometry/book/978-0-387-94329-9#}{\em {Lectures on Polytopes}}, volume 152 of {\em Graduate Texts in
  Mathematics}.
\newblock Springer Verlag, 1995.

\end{thebibliography}
}%small

\end{document}